\newtheorem{theorem}{Remark}
\title{Electricity Price Prediction using Multi-Kernel Gaussian Process Regression combined with Kernel-Based Support Vector Regression}
\author[1]{Abhinav Das}
\author[2]{Stephan Schl\"uter}
\author[3]{Lorenz Schneider}
\affil[1]{Faculty of Mathematics and Economics, Ulm University}
\affil[2]{Institute of Energy Engineering and Energy Economics, Ulm University of Applied Sciences}
\affil[3]{Emlyon Business School, Lyon, France}
\date{}
\begin{document}

\maketitle

\begin{abstract}
\large This paper presents a new hybrid model for predicting German electricity prices. The algorithm is based on a combination of Gaussian Process Regression (GPR) and Support Vector Regression (SVR). Although GPR is a competent model for learning stochastic patterns within data and for interpolation, its performance for out-of-sample data is not very promising. 
By choosing a suitable data-dependent covariance function, we can enhance the performance of GPR for the German hourly power prices being tested. However, since the out-of-sample prediction is dependent on the training data, the prediction is vulnerable to noise and outliers. To overcome this issue, a separate prediction is calculated using SVR, which applies margin-based optimization. This method is advantageous when dealing with non-linear processes and outliers, since only certain necessary points (support vectors) in the training data are responsible for regression. 
The individual predictions are then linearly combined using uniform weights. When tested on historic German power prices, this approach outperforms the publicly available benchmarks, namely the LASSO estimated autoregressive regression model, deep neural network provided in the recent research by \cite{LAGO2021116983} .
\end{abstract}

\renewcommand{\thefootnote}{}

\footnotetext{Keywords: Electricity price prediction, Gaussian Process, Support Vector Regression}
\vfill

\section{Introduction}
\label{S:Introduction}

Predicting energy prices, particularly electricity prices, has been a challenging research problem ever since the European electricity markets were liberalized. Electricity differs from other energy commodities, such as oil or gas, for several reasons, including non-storability and -- more recently -- weather dependency. Modeling features such as hourly electricity prices, supply/demand, and production has become increasingly difficult because of the rising share of renewable power in total energy production. As a consequence market volatility has increased significantly due to its growing dependence on weather conditions that are, by nature, intermittent \cite{husin2021critical}. Multiple additional factors, such as  non-stationarity of the data, complex inter-dependencies, temporal correlations, for example, complicate price prediction. However, Comparatively accurate electricity price prediction is highly desired by various stakeholders including consumers, energy production facilities, and policy makers \cite{wang2019daily}. Thereby, when choosing a forecasting model, we face the following dilemma: On the one hand, constructing a model that efficiently incorporates complex factors and that provides relatively accurate predictions often comes with high computational costs, due to the model's complexity. On the other hand, simplifying the model to reduce computational demands can lead to a loss in prediction accuracy. The challenge is therefore to develop a balanced model that is both simple and interpretable, but that can still provide sufficiently accurate predictions. Achieving this balance is a key task in model development, and we illustrate this challenge using the German power market as an example. The share of renewable energy in Germany is significantly higher than in other markets, including the French market, for instance. In 2023, for example, the aggregated share from renewables in Germany was above 50\% \cite{Fraunhofer}. Consequently, hourly day-ahead prices are more volatile due to the variability of weather conditions. To model such a volatile commodity, several approaches such as statistical or probabilistic methods \cite{cornell2024probabilistic}, machine learning/deep learning methods \cite{ poggi2023electricity}, or their combination (typically known as a hybrid approach \cite{jiang2023multivariable}) are employed.

Given the challenges, a relatively simple and explainable prediction method that can efficiently model the inter-relationship between the price and its influencing factors is indispensable. To achieve this balance, we have formulated a prediction model that is a linear combination of Gaussian Process Regression (GPR) and Support Vector Regression (SVR). GPR is a probabilistic approach and SVR is a machine learning approach. Both are kernel-based methods that are very good at capturing the non-linear relationship between commodities and their influencing factors. We analyzed various kernel functions for both GPR and SVR that fit the characteristics of German Electricity price data. GPR provides the uncertainty associated with predicted values, while SVR makes point forecasts. We therefore used the conformal prediction approach for the interval associated with the point forecast via SVR. The proposed combined model was implemented using German electricity price data, the predicted residual load data, and the total renewable energy production data for the individual years 2021 to 2023. To analyze the performance of the proposed approach, we compared the predictions from our combined model with the benchmark models proposed by \cite{LAGO2021116983} which are LASSO estimated autoregressive (LEAR) model and the deep neural network (DNN) model. The comparison confirms that the proposed model outperforms the aforementioned benchmark models.

The remainder of the paper is organized as follows. 
In Section \ref{S:RelatedWork}, we give a brief overview of the related work in this field of research. 
In Section \ref{S:ElectricityData}, we describe the electricity price data and factors that affect prices. We also present our data sources and the arrangement of the data. In Section \ref{s:benchmark}, we introduced the benchmark models which we have used for the comparison of performance.
A brief introduction to Gaussian processes is given in Section \ref{S:GaussianProcessRegression}, which also provides a detailed formulation for the GPR and the covariance functions. 
Section \ref{S:SupportVectorRegression} focuses both on the deployment of kernel-based SVR to predict electricity prices, and on the prediction interval for SVR. 
In Section \ref{S:AHybridModel}, we introduce the so-called hybrid model as the linear combination with uniform weights of the predicted prices using SVR and GPR. 
In Section \ref{S:NumericalResults}, we test our hybrid model against chosen benchmarks.
We conclude our work and suggest future avenues for this research in Section \ref{S:Conclusion}. 
Additional mathematical reasoning and explanations are given in the Appendices.


\section{Related Work}
\label{S:RelatedWork}

The prediction of electricity prices has become increasingly important following the liberalization of the European power markets. Approaches such as the autoregressive integrated moving average (ARIMA) model or its extension, the seasonal autoregressive integrated moving average (SARIMA) model, have been widely used to forecast electricity prices. In \cite{contreras2003arima}, the authors use the ARIMA model to predict electricity prices in Spain and California. The model performed with non-uniformity in both markets. Similarly, in \cite{conejo2005day}, the authors proposed a hybrid ARIMA model combined with wavelet transform for short-term electricity price forecasting in the Spanish electricity market. Their results demonstrated the effectiveness of the hybrid model in capturing both short-term fluctuations and long-term trends in electricity prices. A similar study \cite{karabiber2019electricity} combined ARIMA with an artificial neural network (ANN) to forecast electricity prices in the western region of Denmark, highlighting the superiority of ARIMA over the seasonal naive model. In \cite{ghayekhloo2019combination}, the authors explored a combination of machine learning algorithms, including support vector machines (SVMs), ANNs, and decision trees, for electricity price forecasting. Their study showed that ensemble methods combining multiple algorithms often outperform individual models, leading to more accurate predictions.

In recent years, deep learning techniques have gained traction in electricity price prediction. \cite{mehmood2020interval}, for instance, proposed a fuzzy neural network for short-term electricity price forecasting. By incorporating fuzzy logic into neural networks, their model improved prediction accuracy, particularly in capturing uncertainty and non-linearity in electricity price data. Furthermore, in \cite{zhang2020deep}, the authors investigated the use of convolutional neural networks (CNN) for daily electricity price forecasting. Their study demonstrated the ability of CNNs to extract non-linear features from historical price data, leading to enhanced prediction performance compared with traditional models. In \cite{bozlak2024optimized}, the authors compared the use of long short-term neural networks and CNNs with a SARIMA  model. 
\cite{uniejewski2024regularization} tested various methods to achieve better prediction using different regularizations, while in \cite{OCONNOR2024101436}, the authors used the LASSO-based auto-regression method to predict electricity prices. In particular, Long Short-Term Memory (LSTM) networks and Deep Neural Networks have demonstrated superior predictive performance, largely due to their ability to capture nonlinear patterns in the data. When considering both predictive accuracy and computational efficiency, these models have outperformed traditional time-series and statistical approaches \cite{LAGO2021116983}. The ensemble model proposed in \cite{LAGO2021116983} comprises a set of four deep neural networks (DNNs), each trained on datasets of varying lengths: 1-year, 2-year, 3-year, and 4-year historical data. This ensemble has been evaluated only against its constituent models and the ensemble LASSO estimated autoRegressive (LEAR) method. For a more comprehensive performance assessment, comparisons with other ensemble learning approaches are necessary. In addition to data-driven approaches, stochastic models have also been employed for electricity price prediction.

Despite the progress made in electricity price prediction, several challenges remain. One is the increasing complexity of the electricity market, which incorporates renewable energy sources and an increasing number of electric vehicles and batteries. These developments introduce additional uncertainty and non-linearity into price data, requiring more sophisticated modeling techniques. In addition, the deregulation of electricity markets has led to increased market volatility and unpredictability, which poses challenges for traditional forecasting methods. As a result, there is a growing need for innovative approaches that can adapt to these changing market dynamics and provide reliable predictions in uncertain environments. 

In conclusion, electricity price prediction is a multifaceted problem that requires a combination of mathematical, statistical, and computational techniques. Despite the significant progress made in this field, ongoing research is needed to address these emerging challenges and to develop more robust and accurate forecasting models. Although time series models like ARIMA, and SARIMA provide a strong foundation for understanding temporal patterns, their assumption of linear relationships restricts their ability to capture complex non-linear dependencies in time series data \cite{hamilton2020time}. Approaches such as generalized autoregressive conditional heteroskedasticity (GARCH) and threshold autoregressive (TAR) models deal with the non-linearity within datasets, however, these models still fall short in terms of overall flexibility and their ability to handle high-frequency data. 
Machine learning and deep learning techniques offer flexibility and scalability for capturing complex relationships in data, but they require large amounts of data to generalize well and are prone to overfitting when applied to noisy, sparse, or volatile data, such as hourly time series \cite{guo2024hybrid}.  
GPR and SVR offer a robust solution as they are non-parametric models that are inherently capable of handling both linear and non-linear relationships. GPR, with its probabilistic framework, can model uncertainty and capture both short-term and long-term dependencies through flexible kernel functions \cite{williams2006gaussian}. SVR's kernel-based approach excels at handling high-dimensional data and is highly robust to outliers and noise \cite{smola2004tutorial}. Combining these two methods, in an ensemble approach, allows for better generalization in high-frequency time series data, making them a superior choice compared with traditional time series models and deep learning techniques, especially when the dataset is noisy, sparse, or irregular.


\section{Electricity Data}
\label{S:ElectricityData}

In 2023, $57$\% of the total load in Germany was provided by renewable sources, with (onshore) wind being the dominant technology followed by solar and hydropower \cite{Fraunhofer}. This important share, combined with the stochastic nature of these renewable energy sources, suggests that they are a fundamental driver for electricity prices in Germany. Our analysis is based on data provided by the Federal Network Agency of Germany (German: Bundesnetzagentur$^1$\footnote{{$^1$\href{https://www.smard.de/home}{Link to Data: Accessed on March 12, 2025}}}). 
The data include historic prices, residual load forecast, and total renewable energy production forecast. As a general practice in the electricity market, electricity produced from renewable sources is traded with guarantee. Gross demand minus the hourly production from renewable sources, known as the \textit{residual load}, thus becomes more relevant as it determines the amount of additional energy that needs to be supplied by non-renewable energy sources. A high residual load indicates that the energy generated from renewable sources is low and that non-renewable sources are required to satisfy demand \cite{liebl2013modeling}. In this context, Figure \eqref{price_load_scatter} shows the relationship between electricity prices and the residual load forecast, while Figure \eqref{price_ren_scatter} shows the relationship between electricity prices and the total renewable energy production forecast.
Explicitly providing the model with information about residual load and total electricity produced by renewable sources helps the algorithm to learn underlying price patterns. As shown in Figures \eqref{real and transf}-\eqref{real and transf ren_prod}, the data display certain levels of heteroscedasticity, noise, and extreme values, which affect the modeling. A log transformation of the data and normalization help to smooth these properties.
\begin{figure}[ht!]
    \centering
    \begin{subfigure}{0.5\linewidth}
        \includegraphics[height = 5cm, width = 6.5cm]{ 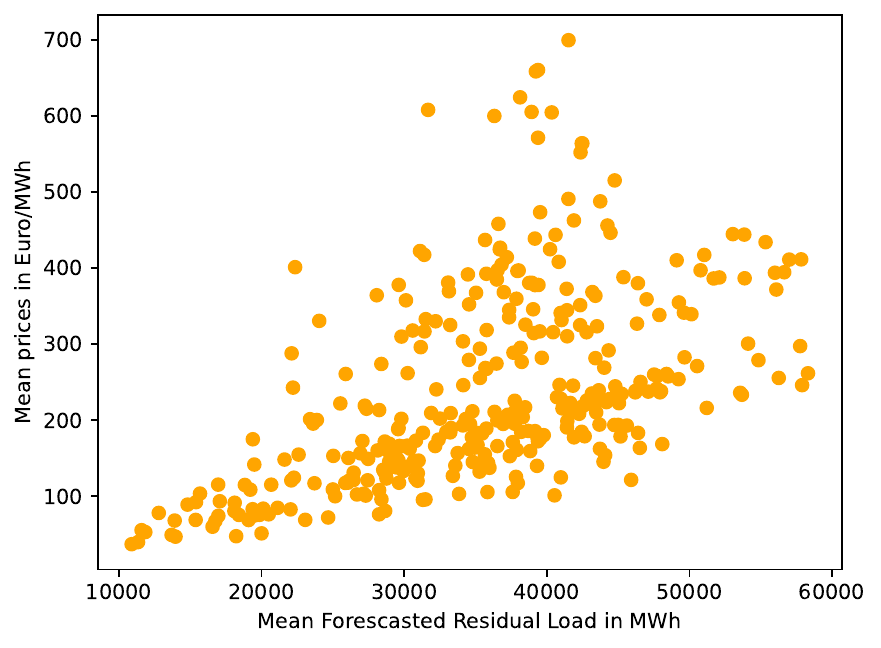}
        \caption{}
        \label{price_load_scatter}
    \end{subfigure}
    \begin{subfigure}{0.45\linewidth}
        \includegraphics[height = 5cm, width = 6.5cm]{ 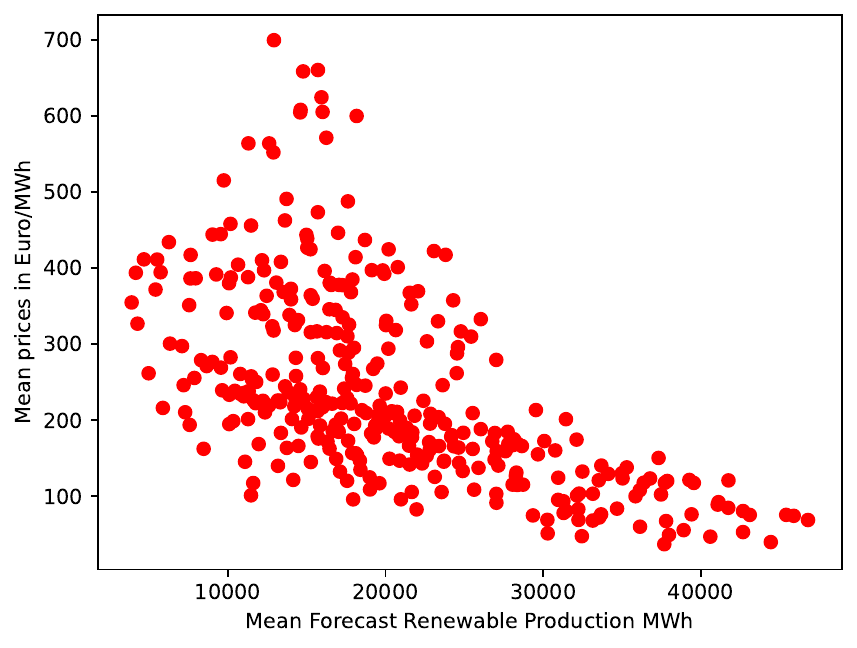}
        \caption{}
        \label{price_ren_scatter}
    \end{subfigure}
    \caption{\centering{Scatter Plot of Historic Data for (a) Daily Average of Forecast Residual Load vs Daily Average of Price and (b) Daily Average of Forecast Total Renewable Production vs Daily Average of Price, for One Year}}
    \label{scatter}
\end{figure} 

\begin{figure}[ht!]
    \centering
    \begin{subfigure}{0.5\linewidth}
        \includegraphics[height = 5cm, width = 6.5cm]{ 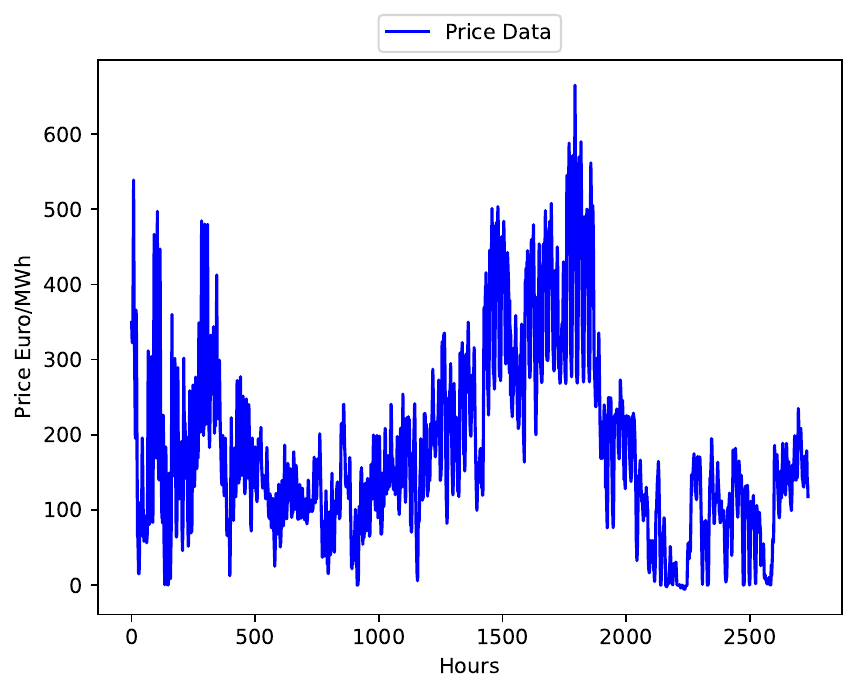}
        \caption{}
    \end{subfigure}
    \begin{subfigure}{0.45\linewidth}
        \includegraphics[height = 5cm, width = 6.5cm]{ 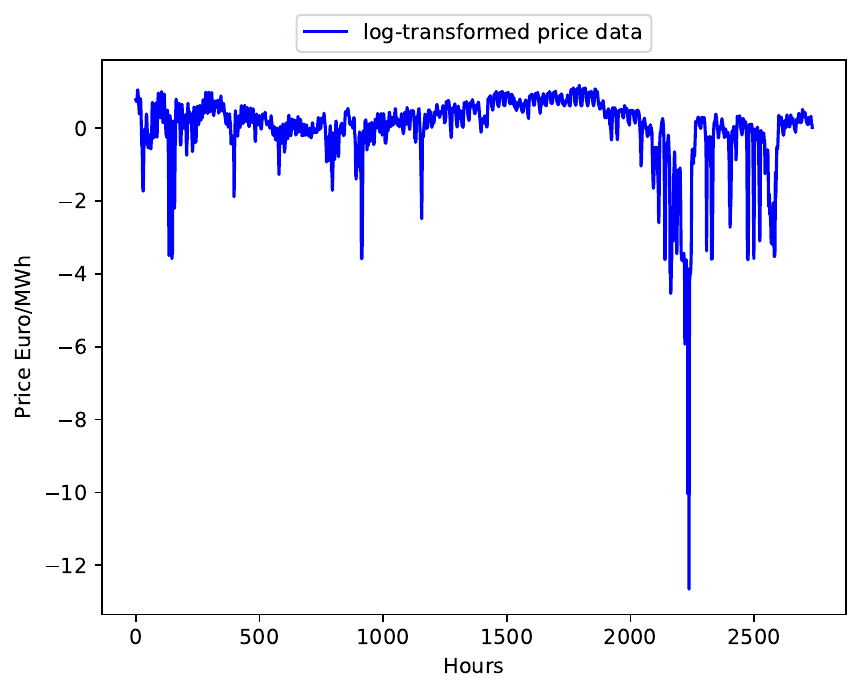}
        \caption{}
    \end{subfigure}
    \caption{Price Data from Sep 30, 2022 to Jan 21, 2023 (a) Real Data and (b) Log-transformed Data}
    \label{real and transf}
\end{figure}

\begin{figure}[ht!]
    \centering
    \begin{subfigure}{0.5\linewidth}
        \includegraphics[height = 5cm, width = 6.5cm]{ 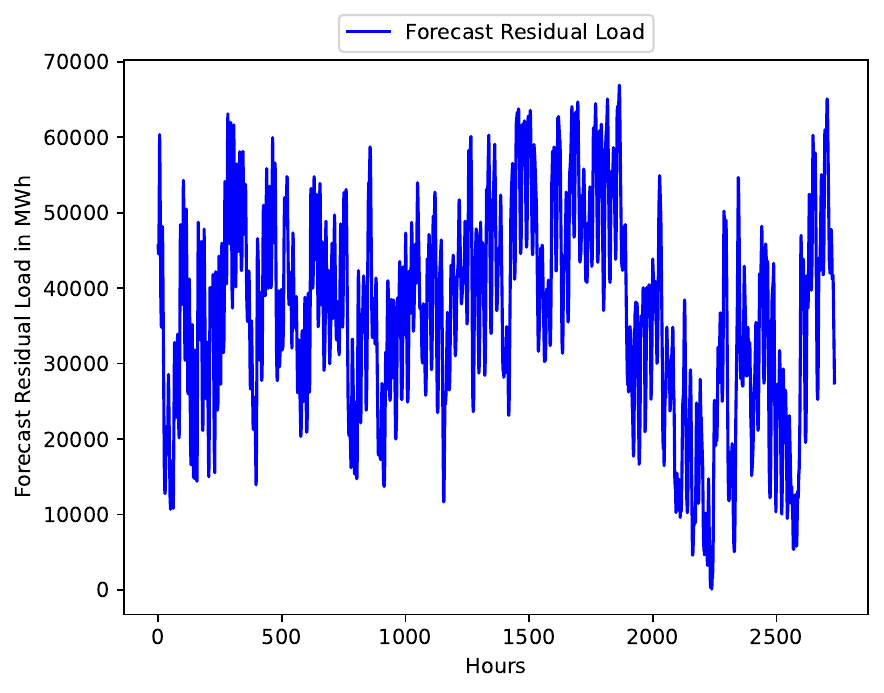}
        \caption{}
    \end{subfigure}
    \begin{subfigure}{0.45\linewidth}
        \includegraphics[height = 5cm, width = 6.5cm]{ 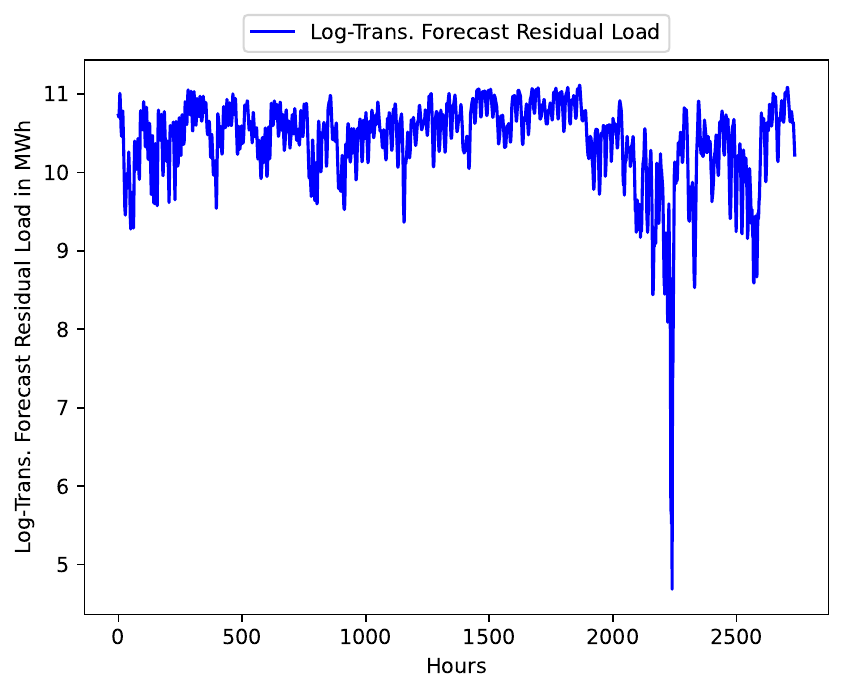}
        \caption{}
    \end{subfigure}
    \caption{Forecast Residual Load Data from Sep 30, 2022 to Jan 21, 2023 (a) Real Data and (b) Log-transformed Data}
    \label{real and transf load}
\end{figure}

\begin{figure}[ht!]
    \centering
    \begin{subfigure}{0.5\linewidth}
        \includegraphics[height = 5cm, width = 6.5cm]{ 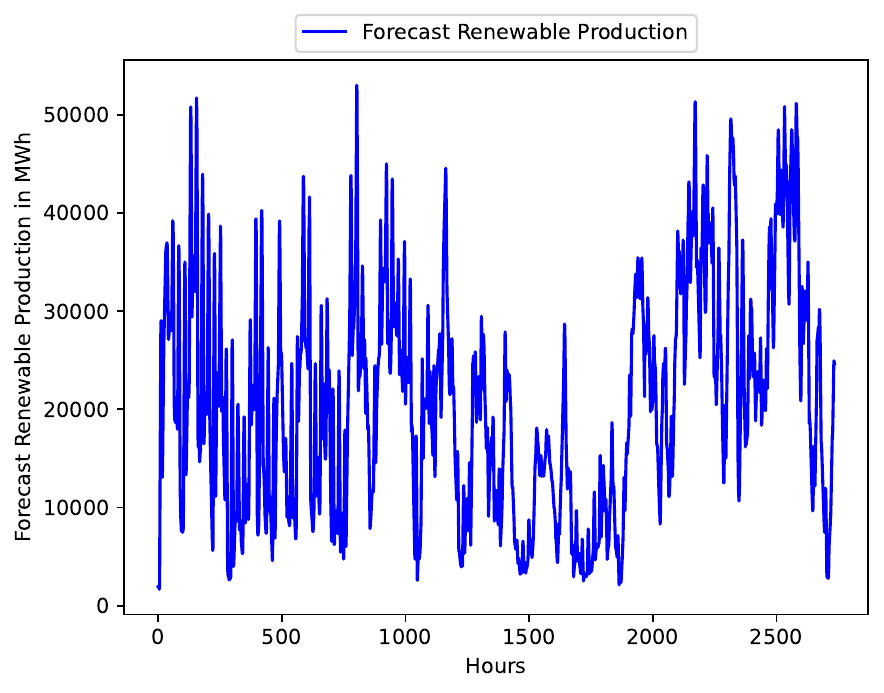}
        \caption{}
    \end{subfigure}
    \begin{subfigure}{0.45\linewidth}
        \includegraphics[height = 5cm, width = 6.5cm]{ 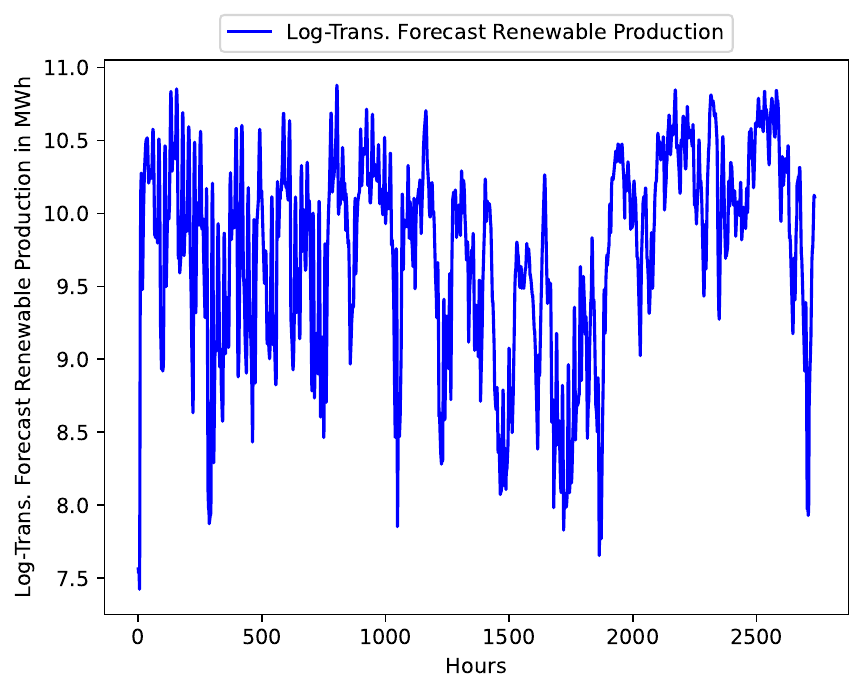}
        \caption{}
    \end{subfigure}
    \caption{Forecast Renewable Energy Production Data from Sep 30, 2022 to Jan 21, 2023 (a) Real Data and (b) Log-transformed Data}
    \label{real and transf ren_prod}
\end{figure}

Weather data are not considered, despite their relevance, since weather is local and station-based, and our model deals with aggregated data for the whole of Germany. Although weather data  are available, it is difficult to efficiently integrate these data into our model, and certain features of the data raise data quality issues. Finally, weather does not affect electricity prices directly, but only indirectly through renewable energy production. This production is considered in our model through solar, wind, biomass, and hydropower. 
We have arranged the datasets as follows:
We consider hourly data for the electricity price, the forecast residual load, and the forecast total renewable energy (day ahead), which means that for one day we have $24$ data points for each variable. For each of these variables, we stack each day in a single vector, which will be a time series of the price, load, and energy production indexed on an hourly basis for each day as follows:
$$\text{Price data} = \begin{bmatrix}
    P_{1}^{(1)}&P_{2}^{(1)}, \cdots, &P_{24}^{(1)}\\
    P_{1}^{(2)}&P_{2}^{(2)}, \cdots, &P_{24}^{(2)}\\
    \vdots & \vdots  &\vdots\\
    P_{1}^{(n)}&P_{2}^{(n)}, \cdots, &P_{24}^{(n)}
\end{bmatrix}_{n\times 24}$$ 

Similarly, we have the load data and total energy production (TRP) data as follows:

$$\text{Load data} = \begin{bmatrix}
    L_{1}^{(1)}&L_{2}^{(1)}, \cdots, &L_{24}^{(1)}\\
    L_{1}^{(2)}&L_{2}^{(2)}, \cdots, &L_{24}^{(2)}\\
    \vdots & \vdots  &\vdots\\
    L_{1}^{(n)}&L_{2}^{(n)}, \cdots, &L_{24}^{(n)}
\end{bmatrix}_{n\times 24}\text{ and\hspace{0.5cm}} \text{TRP data} = \begin{bmatrix}
    R_{1}^{(1)}&R_{2}^{(1)}, \cdots, &R_{24}^{(1)}\\
    R_{1}^{(2)}&R_{2}^{(2)}, \cdots, &R_{24}^{(2)}\\
    \vdots & \vdots  &\vdots\\
    R_{1}^{(n)}&R_{2}^{(n)}, \cdots, &R_{24}^{(n)}
\end{bmatrix}_{n\times 24}$$ 


\section{Benchmark Models}\label{s:benchmark}
The research in \cite{LAGO2021116983} suggest that the LASSO estimated autoregressive (LEAR) model and the  deep neural network (DNN) model are the better performing model when compared to different class and categories of the models for electricity price forecasting. In this study we compare the performance of our hybrid model which predicts via GPR and SVR with both of these benchmark models. We also compare the individual predictions by GPR and SVR with both of these benchmarks. In this section we briefly introduce the construction of LEAR and DNN Models and for detail study of these we request the readers to refer to \cite{LAGO2021116983}:

\subsection{LEAR Model}\label{ss:lear_model}
Let us assume that we have a data set with price of electricity , forecast residual load and forecast total renewable energy production in hourly resolution. With the assumption that the current price of the electricity is dependent on the past prices, forecast residual load and forecast total renewable energy production, the LEAR model to predict the electricity price on $h^{th}$ hour of day $i$, is as follows:
\begin{equation}\label{lear}
    \begin{split}
        P^{(i)}_{h} = & f(P^{(i-1)}, P^{(i-2)}, P^{(i-3)}, P^{(i-7)}, L^{(i)}, L^{(i-1)}, L^{(i-7)}, R^{(i)}, R^{(i-1)}, R^{(i-7)}, \boldsymbol{\theta}_{h})+\epsilon^{(i)}_{h}\\ 
         = & \sum_{j=1}^{24}\theta_{h,j}P^{(i-1)}_{j} + \sum_{j=1}^{24}\theta_{h,24+j}P^{(i-2)}_{j} + \sum_{j=1}^{24}\theta_{h,48+j}P^{(i-3)}_{j}\\
         & + \sum_{j=1}^{24}\theta_{h,72+j}P^{(i-7)}_{j} + \sum_{j=1}^{24}\theta_{h,96+j}L^{(i)}_{j} + \sum_{j=1}^{24}\theta_{h,120+j}L^{(i-1)}_{j} \\
         & + \sum_{j=1}^{24}\theta_{h,144+j}L^{(i-7)}_{j} + \sum_{j=1}^{24}\theta_{h,168+j}R^{(i)}_{j} + \sum_{j=1}^{24}\theta_{h,192+j}R^{(i-1)}_{j}\\
         &  + \sum_{j=1}^{24}\theta_{h,216+j}R^{(i-7)}_{j} + + \sum_{j=1}^{24}\theta_{h,240+j}z^{(i)}_{j} +\epsilon^{(i)}_{h}
    \end{split}
\end{equation}
where $\boldsymbol{\theta}_{h} = [\theta_{h,1},\cdots, \theta_{h,247}]$  are the parameters for LEAR. These parameters are estimated using LASSO as follows:
$$\boldsymbol{\hat{\theta}}_{h} = \operatorname*{argmin}_{\boldsymbol{\theta}_{h}} \sum_{i=8}^{N}(P^{(i)}_{h} - \hat{P}^{(i)}_{h})^{2} + \lambda \sum_{k = 1}^{247}\vert\theta_{h,k}\vert$$
where $\lambda \geq 0$ is the regularization parameter controlling the sparsity of the solution.

\subsection{DNN}\label{ss:dnn}
A deep neural network with 4 hidden layers, trained under a multivariate framework was considered. The parameters $\theta$ are optimized using the Adam algorithm, while hyperparameters and input features were selected via Tree-structured Parzen Estimator (TPE) which is a Bayesian optimization method. This DNN model can also be accessed as python library$^2$.\footnote{$^2$\href{https://epftoolbox.readthedocs.io/en/latest/modules/started.html}{Link to Python Toolbox}}


\section{Gaussian Process Regression}
\label{S:GaussianProcessRegression}

\subsection{Gaussian Processes}
\label{SS:GaussianProcess}

A stochastic process given by $G$ = $\{P_{\mathbf{t}}: \mathbf{t}\in T\}$, where $T$ is the index set, is said to be a Gaussian process (GP) if and only if for every finite set of indices 
$\{\mathbf{t}_{1},\cdots, \mathbf{t}_{n}\}$ in $T$ the random vector, say, 
$\mathbf{P} = \{P_{\mathbf{t}_{1}}, \cdots, P_{\mathbf{t}_{n}}\}$ is jointly Gaussian, i.e., the joint density is given by:
\begin{equation}\label{Gaussian density}
    f_{\mathbf P} = \frac{\exp\left(\text{\textminus}\frac{1}{2} \left({\mathbf p} \text{\textminus} {\boldsymbol\mu}_{\mathbf{P}}\right)^\mathrm{T}{\boldsymbol\Sigma}_{\mathbf{P}}^{\text{\textminus}1}\left({\mathbf p}\text{\textminus}{\boldsymbol\mu}_{\mathbf{P}}\right)\right)}{\sqrt{(2\pi)^k |\boldsymbol\Sigma_{\mathbf{P}}|}},
\end{equation} 
where $\boldsymbol{\mu}_{\mathbf{P}}$ is the mean vector of $\mathbf{P}$ given by the mean function $\boldsymbol{m}(x)$, and $\boldsymbol{\Sigma}_{\mathbf{P}}$ is the covariance matrix given by the covariance function $\bar{K}(\mathbf{t}_{i}, \mathbf{t}_{j})$. These parameters fully explain the Gaussian process \cite{GP_vag002, pavliotis2014stochastic}. In this study, $T\subset\mathbf{R}^{248}$ such that $T=\{\mathbf{t}_{i}: \mathbf{t}_{i}, i = 1,\cdots,n\}$
where $$\mathbf{t}_{i} = [i, P^{(i-1)}, P^{(i-2)}, P^{(i-3)}, P^{(i-7)}, L^{(i)}, L^{(i-1)}, L^{(i-7)}, R^{(i)}, R^{(i-1)}, R^{(i-7)}, BD^{(i)}],$$ $P^{(v)},L^{(v)},R^{(v)} \in \mathbf{t}_{i}$ are deterministic and $BD$ represents the binary dummy variable for the week days.
\hfill\newline\newline
The covariance function $K:T\times T\rightarrow \mathbb{R}$  must be positive, semi-definite, and symmetric. If $\boldsymbol{\Sigma}_{\mathbf{P}}$ is the covariance matrix formed by $K$, then it follows that: 
\begin{eqnarray*}
\mathbf{x}^{\top} \boldsymbol{\Sigma}_{\mathbf{P}} \mathbf{x} \geq 0\hspace{0.2cm} \text{for any $\mathbf{x}\in \mathbb{R}^{\text{n}}$ }\hspace{0.2cm} \text{and $a_{i,j} = a_{j,i}\forall a_{i,j} \in \boldsymbol{\Sigma}_{\mathbf{P}}$}.
\end{eqnarray*}
The mean function is simply defined as $\boldsymbol{m}(x): T\rightarrow \mathbb{R}$.
\hfill\newline\newline
Covariance functions are often referred to as kernels. In general, a kernel is a bivariate function that is used to transform a function $g$ via a convolution operation $C$:
\begin{equation*}
(Cg)(\mathbf{u}) = \int_{T}g(\mathbf{v})K(\mathbf{u},\mathbf{v})d\mathbf{v},
\end{equation*}
where $K(\mathbf{u},\mathbf{v})$ is the kernel that defines how the function is transformed \cite{debnath2016integral}. Analogously, the covariance function specifies the Gaussian process. Hereafter, the terms covariance functions and kernel are used interchangeably.


\subsection{Gaussian Process Regression}
\label{SS:GaussianReg}

Classified as a non-parametric Bayesian method \cite{orbanz2010bayesian}, GPR is a simple yet powerful method for modeling complex relationships in a dataset. GPR can be viewed both in terms of standard regression, where the output is the linear combination of the input variables, and as a functional form where the Gaussian process denotes the distribution over functions. 
The latter is often referred to as the functional space view \cite{williams2006gaussian}. In the functional space view, the prior distribution is defined by the mean and covariance function, $\mathbf{P}\sim \mathcal{GP}\left(\boldsymbol{\mu}_{P}, \boldsymbol{\Sigma}_{P}\right)$. 
Using the Bayesian method, the posterior distribution is then obtained using the prior and observed data. Since a Gaussian process is fully specified by its mean and covariance functions, with a given dataset, we only need to estimate the mean vector and the covariance matrix. 
In practical situations, it is common to assume the mean function as a constant function or zero (this can be achieved by normalization), allowing the Gaussian process to be fully specified by the covariance function, which reduces the complexity. The choice of covariance function therefore depends on the nature of the data, and the parameters of this covariance function (also known as the hyperparameters of the Gaussian process) are then estimated based on the dataset. In terms of machine learning, this whole process is called training, which includes estimating the hyperparameter and deriving the covariance matrix using the chosen function. This step enables us to define the prior and obtain its parameters. 

Predicting the value of the random variable (function) at unobserved points, say $\mathbf{t}^{*}$, is achieved by computing the posterior distribution.
Let us assume that we have a finite collection 
$\mathbf{P}=\{P_{\mathbf{t}_{1}},\cdots, P_{\mathbf{t}_{n}}\}$ 
from a Gaussian process, observed at points 
$T = \{\mathbf{t}_{1},\cdots,\mathbf{t}_{n}\}$ 
with a given covariance function $K_{\text{GPR}}$ and mean function $\boldsymbol{\mu}_{P}$. 
Given previous observations, we intend to predict at $u$ unobserved points, say 
$T^{*} = \{\mathbf{t}_{1}^{*}, \cdots, \mathbf{t}_{u}^{*}\}$, 
which is denoted by
$\mathbf{P}^{*} = \{P_{\mathbf{t}_{1}^{*}}, \cdots, P_{\mathbf{t}_{u}^{*}} \}$. 
Assuming that $\mathbf{P} \text{ and } \mathbf{P}^{*}$ are jointly Gaussian, their joint distribution is as follows:

\begin{equation}
    \begin{bmatrix}
        \mathbf{P}\\
        \mathbf{P}^{*}
    \end{bmatrix}
    \sim \mathcal{N} 
    \begin{pmatrix}
        \begin{bmatrix}
            \boldsymbol{\mu}_{\mathbf{P}}\\
            \boldsymbol{\mu}_{\mathbf{P^{*}}}
        \end{bmatrix}, & \begin{bmatrix}
        \Sigma_{\mathbf{P}}+\sigma_{n}\mathbf{I} & \Sigma_{\mathbf{P},\mathbf{P}^{*}}\\
        \Sigma_{\mathbf{P}^{*},\mathbf{P}} & \Sigma_{\mathbf{P}^{*}}
    \end{bmatrix}_{(n+u)\times(n+u)}    
    \end{pmatrix}
\end{equation} 
where, \begin{itemize}
    \item $\boldsymbol{\mu}_{\mathbf{P}}$ and $\boldsymbol{\mu}_{\mathbf{P^{*}}}$ are the respective mean vectors for $\mathbf{P}$ and $\mathbf{P^{*}}$
    \item $\Sigma_{\mathbf{P}} = \{K_{\text{GPR}}(\mathbf{x}_{i},\mathbf{x}_{j})\}_{i,j=1}^{n}$ is a covariance matrix from $\mathbf{P} \text{ and }\mathbf{P}$ of order $n \times n$,
    \item $\Sigma_{\mathbf{P},\mathbf{P}^{*}} = \{K_{\text{GPR}}(\mathbf{x}_{i},\mathbf{x}^{*}_{j})\}_{i=1,\cdots,n,j=1,\cdots,m}$ is a covariance matrix from $\mathbf{P} \text{ and }\mathbf{P}^{*}$ of order $n \times u$,
    \item $\Sigma_{\mathbf{P}^{*},\mathbf{P}} = \left(\Sigma_{\mathbf{P},\mathbf{P}^{*}}\right)^{\top}$ 
    \item $\Sigma_{\mathbf{P}^{*}} = \{K_{\text{GPR}}(\mathbf{t}^{*}_{i},\mathbf{t}^{*}_{j})\}_{i,j=1}^{m}$ is a covariance matrix from $\mathbf{P}^{*}\text{ and }\mathbf{P}^{*}$ of order $u \times u$,
    \item $\mathbf{I}$ is the identity matrix of size $n\times n$ and $\sigma_{n}$ is the noise variance.
\end{itemize}
As shown in \cite{williams2006gaussian}, the respective posterior mean and posterior covariance are given by:
    \begin{align}
        \boldsymbol{\mu}^{*}\label{posterior mean} &= \boldsymbol{\mu}_{\mathbf{P^{*}}}+\Sigma_{\mathbf{P}^{*},\mathbf{P}} \left(\Sigma_{\mathbf{P}}+\sigma_{n}\mathbf{I}\right)^{\text{\textminus}1}\left(\mathbf{P} \text{\textminus} \boldsymbol{\mu}_{\mathbf{P}}\right), \\
        \boldsymbol{\Sigma}^{*}\label{posterior cov} &= \Sigma_{\mathbf{P}^{*}} \text{\textminus} \Sigma_{\mathbf{P}^{*},\mathbf{P}} \left(\Sigma_{\mathbf{P}}+\sigma_{n}\mathbf{I}\right)^{\text{\textminus}1}\Sigma_{\mathbf{P},\mathbf{P}^{*}}.
    \end{align}
Here $\boldsymbol{\mu^{*}}$ and $\boldsymbol{\Sigma}^{*}$ are of order $u\times 1$ and $u\times u$, respectively. As discussed earlier, if we assume that $\boldsymbol{\mu}_{\mathbf{P}}=\boldsymbol{\mu}_{\mathbf{P^{*}}}=0$, equation \eqref{posterior mean} further simplifies to $\boldsymbol{\mu}^{*} = \Sigma_{\mathbf{P}^{*},\mathbf{P}} \left(\Sigma_{\mathbf{P}}+\sigma_{n}\mathbf{I}\right)^{\text{\textminus}1}\mathbf{P} $.
In \cite[Chapter 2]{williams2006gaussian} it has been shown that $\boldsymbol{\mu^{*}}$ is the best estimate of the prediction at the new point $\mathbf{t^{*}}$. 
Similarly, $\boldsymbol{\Sigma}^{*}$ gives the variance of the prediction, which quantifies the uncertainty associated with the prediction. 
The predicted value at the test point $\mathbf{t}_{s}^{*} \in T^{*}$ is therefore $\mu_{s}^{*}$, the $s^{th}$ component of $\boldsymbol{\mu^{*}}$, and the uncertainty associated with the prediction is $\sqrt{\Sigma_{ss}}$. 
The detailed derivation for the posterior mean and variance can be found in  \cite[Chapter 2]{tong2012multivariate}. The prediction interval at the test point $\mathbf{t}_{s}^{*} \in T^{*}$ is given by:
\begin{equation}\label{gpr_interval}
    I_{GPR} = [lb_{GPR},ub_{GPR}], \hspace{0.5cm} \text{such that}
\end{equation}
$$lb_{GPR}=\mu_{s}^{*}\text{\textminus}z_{\alpha/2}\sqrt{\Sigma_{ss}} \text{ and } ub_{GPR} = \mu_{s}^{*}+z_{\alpha/2}\sqrt{\Sigma_{ss}}.$$
Here, $z_{\alpha/2}$ denotes the z-score at a confidence level of $1-\alpha$. We have chosen a confidence interval of 95\%.


\subsubsection{Covariance Functions}
\label{SS:CovarianceFunctions}

Covariance functions describe the similarity of two random functions \cite{wilson2013gaussian} and exist in various different forms, e.g., isotropic stationary, anisotropic stationary, locally stationary, non-stationary, etc. For more details, see \cite{genton2001classes,williams2006gaussian} and the references therein. The chosen covariance function must reflect the structural properties (roughness or smoothness, short- or long-term fluctuations) of the data. The so-called decay rate of the function must also be considered. It determines the speed at which the prediction reverts to the mean of the Gaussian process as the prediction point moves farther away from the observed points (similar to the mean reversion parameter in an Ornstein-Uhlenbeck process). Given the structure of electricity prices, we chose a squared exponential function and a rational quadratic function to construct our model. In the following, we describe both functions and also include sample predictions for illustration. We forecast prices on July 17, 2022. For our prediction, we trained our model with 365 days where each input and output are chosen as explained in the section \ref{SS:GaussianProcess}. 

\begin{enumerate}
    \item Squared Exponential Covariance: This function is also known as the Gaussian covariance function and is appropriate for datasets with local structures. In our case, the electricity price shows repeating patterns in similar situations such as weekends, public holidays, and hours of the day -- in other words, seasonality\cite{ulapane2020hyper}. In \cite{quinonero2005unifying}, the squared exponential covariance function is used to capture the local structure within the dataset. The electricity prices are also locally smooth, which is another motivation for using a squared exponential covariance in this case. It is defined as follows:
    $$K_{se}(\mathbf{t}_{i},\mathbf{t}_{j}) = \sigma_{se}^{2}\exp\left(\text{\textminus}\frac{\vert\vert \mathbf{t}_{i} \text{\textminus}\mathbf{t}_{j} \vert\vert^{2}}{2\ell_{se}^{2}}\right).$$
    Here, $\vert\vert \mathbf{t_{i}} \text{\textminus}\mathbf{t_{j}} \vert\vert$ is the Euclidean distance between points $\mathbf{t}_{i}$ and $\mathbf{t}_{j}$, while $\sigma_{se}$ and $\ell_{se}$ are respectively the variance and length parameters for the $K_{se}$, considered as hyperparameters of the Gaussian process. These hyperparameters are estimated using the maximum likelihood estimation (MLE).
    \hfill\newline
    \item Rational Quadratic Covariance: Electricity price data exhibit short-term variations due to changes in renewable energy production, cf. \cite{du2024integrating}, whereas long-term changes stem from new policy implications, cf. \cite{adom2017long}. In such a situation, the rational quadratic function is an appropriate choice. It is defined as follows:
    $$K_{rq}(\mathbf{t}_{i},\mathbf{t}_{j}) = \sigma_{rq}^{2}\left(1+\frac{\vert\vert\mathbf{t}_{i} \text{\textminus} \mathbf{t}_{j}\vert\vert^{2}}{2\alpha \ell_{rq}^{2}}\right)^{\text{\textminus}\alpha_{rq}}$$
    where $\mathbf{t_{i}}$ and $\mathbf{t_{j}}$ are $d-$dimensional time points.
\end{enumerate}
The assumption of smoothness in squared exponential covariance function, governed by the single length-scale parameter $\ell_{se}$, restricts the model to accurately represent functions showing the localized irregularities across different scales. This limitation is eliminated in case of rational quadratic covariance function since it is a scale mixture of squared exponential covariance function with varying scale parameters. The mixture results in a function which has polynomial decay, controlled by the shape parameter $\alpha$ allowing it to capture the multi-scale phenomena or heavier-tailed dependencies. However, in a scenario where the underlying process is smooth across the input space, the rational quadratic function allow the Gaussian process to learns overs the less smooth function which might lead to less accurate model. 

Next we provide the mathematical explanation for the performance of these covariance function when used individually. We also compare the performance of the individual function with their additive combination.
We discuss the local and global behavior of these two individual covariance functions mentioned in the previous section, as well as the behavior of their sum. For this analysis, we use partial differentiation with respect to the distance between any two points in the input space, denoted by $\mathbf{t}_{i}$ and $\mathbf{t}_{j}$. Since squared exponential and rational quadratic covariance functions are isotropic \cite{genton2001classes}, we can therefore consider that $K(\mathbf{t}_{i},\mathbf{t}_{j}) = K(||\mathbf{t}_{i}\text{\textminus}\mathbf{t}_{j}||) $. 
Assuming $r = ||\mathbf{t}_{i} \text{\textminus} \mathbf{t}_{j}||$, we have:
    \begin{equation}\label{se_1}
        \begin{split}
              K_{se}(r) = &\sigma_{se}^{2}\exp\left(\text{\textminus}\frac{ r^{2}}{2\ell_{se}^{2}}\right)\\
             \implies \frac{\partial K_{se}(r)}{\partial r} = &\sigma_{se}^2 \left( \text{\textminus}\frac{r}{\ell_{se}^2} \right) \exp \left( \text{\textminus}\frac{r^2}{2 \ell_{se}^2} \right)\text{ and }\\ 
        \end{split}
    \end{equation}
\begin{equation}\label{se_2}
    \frac{\partial^2 K_{se}(r)}{\partial r^2} = \sigma_{se}^2 \left( \frac{r^2}{\ell_{se}^4} \text{\textminus} \frac{1}{\ell_{se}^2} \right) \exp \left( \text{\textminus}\frac{r^2}{2 \ell_{se}^2} \right).
\end{equation}
Analogously, for the rational quadratic function we obtain:
\begin{equation}\label{rq_1}
    \begin{split}
        \frac{\partial K_{rq}(r)}{\partial r} = \text{\textminus}\frac{r.\sigma_{rq}^2}{ \ell_{rq}^2} \cdot \left( 1 + \frac{r^2}{2 \alpha_{rq} \ell_{rq}^2} \right)^{\text{\textminus}\alpha_{rq} \text{\textminus} 1} \text{ and }
    \end{split}
\end{equation}
\begin{equation}\label{rq_2}
    \begin{split}
        \frac{\partial^2 K_{rq}(r)}{\partial r^2} = &  \frac{\sigma_{rq}^2 . r^2 . (\alpha_{rq} + 1)}{(\alpha_{rq} \ell_{rq}^4)}\cdot  \left( 1 + \frac{r^2}{2 \alpha_{rq} \ell_{rq}^2} \right)^{\text{\textminus}\alpha_{rq} \text{\textminus} 2}\\
        & \text{\textminus}\left(\frac{\sigma_{rq}}{\ell_{rq}}\right)^{2} \cdot \left( 1 + \frac{r^2}{2 \alpha_{rq} \ell_{rq}^2} \right)^{\text{\textminus}\alpha_{rq} \text{\textminus} 1}
    \end{split}
\end{equation}
Similarly, for $K = K_{se} + K_{rq}$ we have: 
\begin{equation}
    \begin{split}\label{K_1}
        \frac{\partial K(r)}{\partial r} = &\frac{\partial K_{se}(r)}{\partial r} + \frac{\partial K_{rq}(r)}{\partial r}\\
         = & \sigma_{se}^2 \left( \text{\textminus}\frac{r}{\ell_{se}^2} \right) \exp \left( \text{\textminus}\frac{r^2}{2 \ell_{se}^2} \right) \text{\textminus} \frac{r.\sigma_{rq}^2}{ \ell_{rq}^2} \cdot \left( 1 + \frac{r^2}{2 \alpha_{rq} \ell_{rq}^2} \right)^{\text{\textminus}\alpha_{rq} \text{\textminus} 1} \text{ and }
    \end{split}
\end{equation}
\begin{equation}\label{K_2}
    \begin{split}
        \frac{\partial^2 K(r)}{\partial r^2} =& \sigma_{se}^2 \left( \frac{r^2}{\ell_{se}^4} \text{\textminus} \frac{1}{\ell_{se}^2} \right) \exp \left( \text{\textminus}\frac{r^2}{2 \ell_{se}^2} \right) \\ 
        &+ \frac{\sigma_{rq}^2 . r^2 . (\alpha_{rq} + 1)}{(\alpha_{rq} \ell_{rq}^4)}\cdot  \left( 1 + \frac{r^2}{2 \alpha_{rq} \ell_{rq}^2} \right)^{\text{\textminus}\alpha_{rq} \text{\textminus} 2} \\
        & \text{\textminus} \left(\frac{\sigma_{rq}}{\ell_{rq}}\right)^{2} \cdot \left( 1 + \frac{r^2}{2 \alpha_{rq} \ell_{rq}^2} \right)^{\text{\textminus}\alpha_{rq} \text{\textminus} 1}.
    \end{split}
\end{equation}

From Equation \eqref{se_1}, it is clear that the change in the squared exponential covariance function with respect to the distance between two points (we refer to this as sensitivity) is very small when the two points in the input space are close to each other. Specifically: $\text{ $\frac{\partial K_{se}(r)}{\partial r} \rightarrow 0 $ as $r\rightarrow 0$}$. For larger distances, the change in the covariance function becomes less significant because of the exponential factor in the RHS of Equation \eqref{se_1}: $\exp(\text{\textminus}\frac{r^{2}}{2\ell_{se}^{2}})$ tends to 0 as $r\rightarrow \infty.$ The length scale in the training data, estimated over the training input and output, also provides information about how quickly or slowly our covariance function decays. From Equation \eqref{se_1}, we can see that when $\ell_{se}$ increases, the decay becomes slower. For the rational quadratic covariance function, the case is similar when two input points are closer because: $\frac{\partial K_{rq}(r)}{\partial r}\rightarrow 0$ as $r\rightarrow 0$. However, for more distant points, the decay is slower due to the polynomial term in the denominator: 
\begin{eqnarray*}
\frac{1}{\left(1+\frac{r^{2}}{2\alpha_{rq}\ell_{rq}^{2}}\right)^{(\alpha_{rq} +1)}}\rightarrow 0 \text{ as } r \rightarrow \infty.
\end{eqnarray*}

 Additionally, in the rational quadratic case, $\alpha$ provides more flexibility for addressing the sensitivity of the rational quadratic function. From Equation \eqref{rq_1}, we observe that the smaller the $\alpha$, the slower the decay. Similarly, from Equation \eqref{se_2}, we see that the change in structure (curvature) is very fast, since the exponential term in the RHS dominates as the distance between the points in the input space increases and converges to zero. It is, nonetheless, smooth. The convergence for the rational quadratic covariance function is comparatively slower than that of the squared exponential due to the presence of polynomial terms. However, the smoothness varies with $\alpha_{rq}$, as can be seen in Equation \eqref{rq_2}.

 \begin{figure}[ht!]
    \centering
    \centering
        \begin{subfigure}{0.5\linewidth}
            \includegraphics[height = 6.5cm, width = 6.5cm]{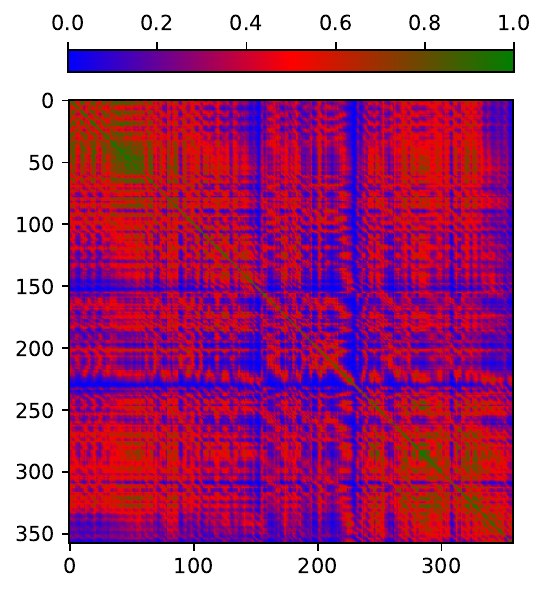}
            \caption{\centering{number of values$<0.2: 28484$} }
            \label{rb_stand}
        \end{subfigure}
        \begin{subfigure}{0.45\linewidth}
            \includegraphics[height = 6.5cm, width = 6.5cm]{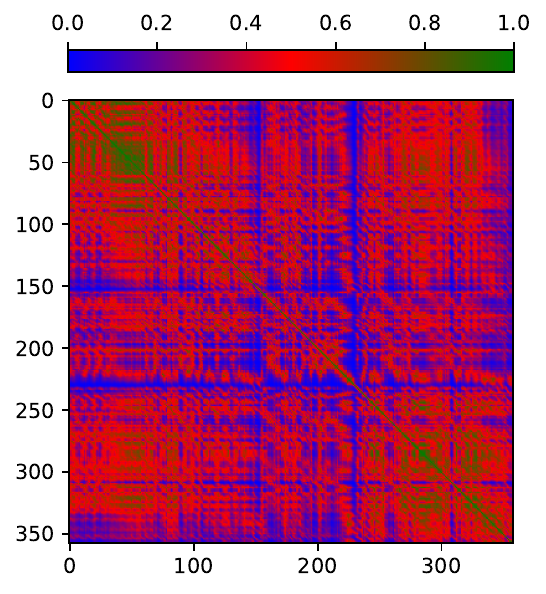}
            \caption{\centering{number of values$<0.2: 22236$} }
            \label{rq_stand}
        \end{subfigure}
        \begin{subfigure}{0.45\linewidth}
            \includegraphics[height = 6.5cm, width = 6.5cm]{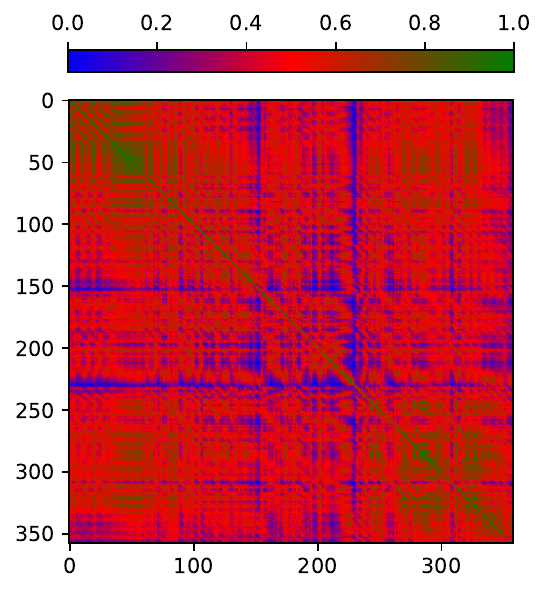}
            \caption{\centering{number of values$<0.2: 4556$} }
            \label{cm_stand}
        \end{subfigure}
        \caption{\centering{Behavior of the Covariance Functions: (a)Via the Squared Exponential, (b) Via the Rational Quadratic, and (c)Via a Combination of Both}}
        \label{covariance_structure}
\end{figure}

 To verify the theoretical findings from equations \eqref{se_1} to \eqref{K_2}, we tested these covariance functions on the GPR inputs. The parameters of the covariance function are estimated using 248-dimensional input, $\mathbf{t}_{i}$ and 1-dimensional output from past 365 days. As an illustration, we took data from July 17, 2021 to July 16, 2022. The inputs are the scaled-hour, log-transformed forecast residual load and the log-transformed forecast total renewable energy production. The training inputs designed as explained in Section \ref{SS:GaussianProcess} where as the the training outputs are the electricity price for a particular hour.

For the purpose of comparison we scaled the the covariance matrices to [0,1] and considered the values less than 0.2 comparatively insignificant. We see from Figure \ref{rb_stand} that the number of elements less than 0.2 in the covariance matrix, obtained using the squared exponential function is highest, which is justified by equations \eqref{se_1} and \eqref{se_2}, inferring that the distant points have near-zero covariance. 
In Figure \ref{rq_stand}, which displays the rational quadratic function, the number of elements less than 0.2 is lesser than that of squared exponential case since the decay is slower, as discussed previously. In terms of rational long-range dependence, the rational quadratic covariance function performs better than the squared exponential covariance function. However, the latter can be considered for modeling purposes because of the smoothness property, for which a squared exponential kernel is preferable. 
Both covariance functions have advantages and drawbacks, and these limitations can be reduced by combining these two functions in a summation. In equations \eqref{K_1} and \eqref{K_2}, and in Figure \ref{cm_stand}, it is evident that the covariance function $K=K_{se}+K_{rq}$ is capable of capturing both smoothness and long-range dependencies. This empirical observation is supported by fundamental properties of kernel function spaces. Let \( \mathcal{H}_{SE} \) and \( \mathcal{H}_{RQ} \) denote the Reproducing Kernel Hilbert Spaces (RKHSs) associated with \( K_{SE} \) and \( K_{RQ} \), respectively. Then the RKHS of the sum kernel satisfies:
\[
\mathcal{H}_K = \mathcal{H}_{SE} + \mathcal{H}_{RQ},
\]
meaning it contains all functions that can be expressed as \( f = f_1 + f_2 \), with \( f_1 \in \mathcal{H}_{SE} \), \( f_2 \in \mathcal{H}_{RQ} \). Since the sum of two RKHSs is strictly larger (in terms of expressiveness) than either component space, this inclusion guarantees a strictly richer hypothesis class one capable of modeling both smooth, localized variations and broader, rougher trends simultaneously.

Moreover, in the Bayesian framework, the Gaussian Process prior induced by this composite kernel assigns non-zero probability to a broader class of functions. This enhances prior support and leads to more robust posterior inference, particularly in heterogeneous data regimes. The model becomes less prone to underfitting, as it does not rely solely on the narrow inductive biases of a single kernel type. Thus, both mathematical structure and empirical performance point unequivocally to the superiority of the additive kernel formulation. It is not simply a hybrid; it is a principled extension with demonstrably greater modeling capacity.

We discuss the summation of Gaussian processes and its effect on the prior and posterior mean and variance in Appendix \ref{A:SumOfGaussianProcesses}. In addition, in Appendix \ref{A:ExploringPeriodicBehavior}, we discuss the ability of $K=K_{se}+K_{rq}$ to capture the periodic behavior of the data. For the uncertainty (posterior variance) associated with the prediction, we can infer from Equation \eqref{posterior cov}, and from the analysis of the covariance function, that the posterior variance for the combined covariance function includes the variance of both covariance functions. This, in turn, leads to a better adjustment of the variance in the input space. There are many other covariance functions, such as the exponential covariance function, the polynomial covariance function, or functions belonging to the maternal family \cite{paciorek2003nonstationary}. However, combining the squared exponential and rational quadratic functions provides an appropriate balance for the required short-range and long-range changes and dependencies.
\begin{figure}[ht!]
    \centering
    \includegraphics[height = 5.2cm, width = 12cm]{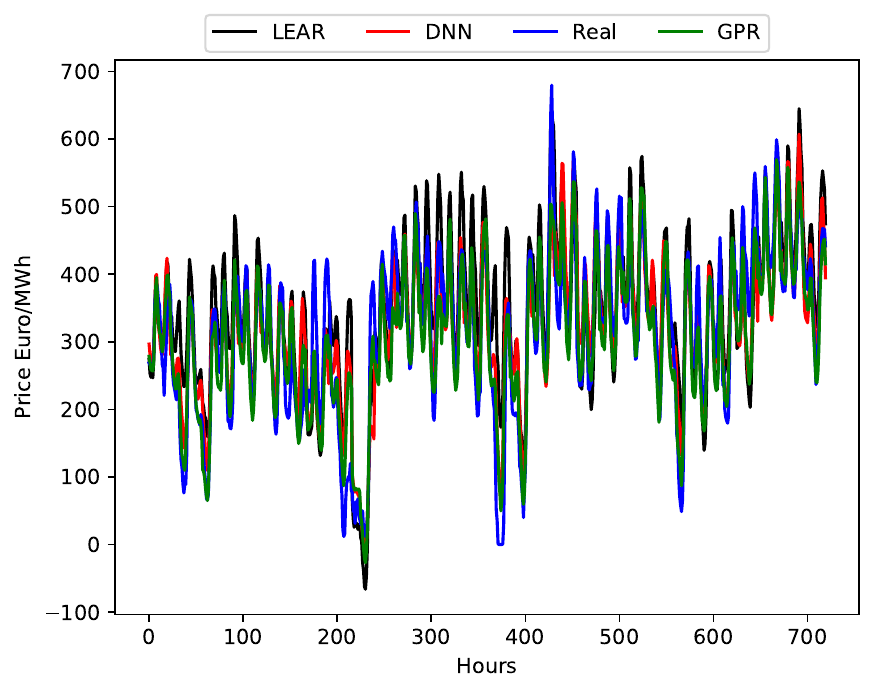}
    \caption{Gaussian Process Regression Comparison Predicted Prices for a Month of July, 2022}
    \label{gpr_com_ker_july}
\end{figure}

\begin{figure}[ht!]
    \centering
    \includegraphics[height = 5cm, width = 12cm]{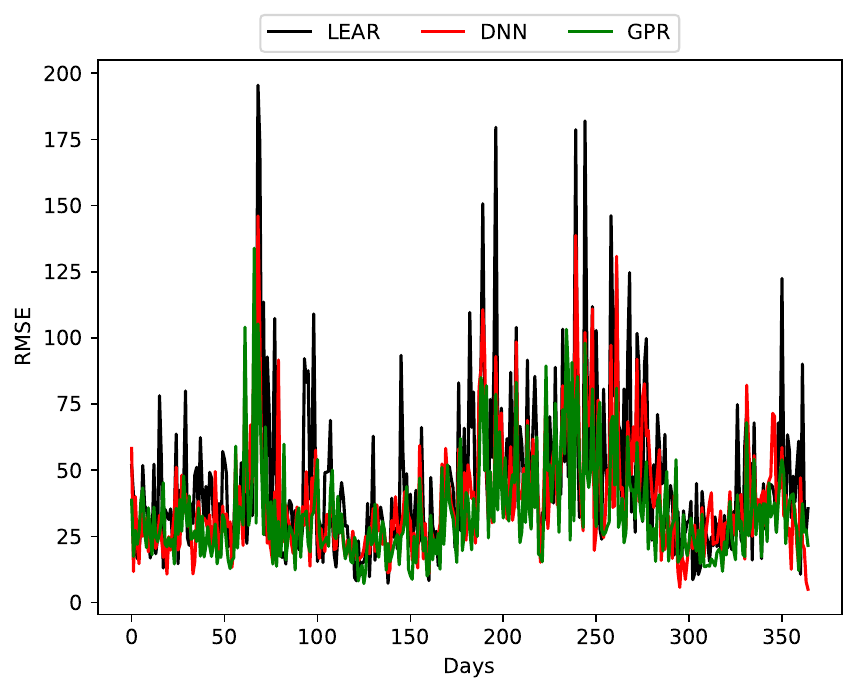}
    \caption{RMSE of Prices from January 1, 2022 to December 31, 2022}
    \label{GPR_RMSE_2022}
\end{figure}
Leveraging the enhanced functional expressiveness and covariance structure of the additive composite kernel, we adopt it as the prior covariance function in our Gaussian Process Regression (GPR) model.
The model is trained to forecast daily electricity prices for the year 2022, with predictive accuracy evaluated via the root mean square error (RMSE) metric. For qualitative assessment, Figure \ref{gpr_com_ker_july} illustrates the comparative predictions, underscoring GPR’s superior generalization capability. Quantitative results, presented in Figure \ref{GPR_RMSE_2022} and Table \ref{gpr_eval_rmse}, demonstrate that GPR consistently outperforms the benchmark models proposed described in Section \ref{s:benchmark} across the full-year horizon. While this section and the intermediate sections focus specifically on evaluating model performance for 2022, the final numerical analysis spans a broader timeframe from 2021 to 2023. The focus on 2022 for individual models evaluation is also carried to test the models with worst-case scenarios since 2022 was considered to be volatile and year deviated from typical trends in terms of energy market in Europe, primarily due to the abrupt geopolitical crisis that occurred during the year \cite{KUZEMKO2022102842}.

\begin{table}[ht!]
\centering
\caption{Errors for 2022}
\begin{tabular}{l c }
        \toprule
        Model & {RMSE}\\
        \midrule
        GPR      & 33.800\\
        LEAR      & 46.100\\
        DNN      & 38.171\\
        \bottomrule
        \end{tabular}
\label{gpr_eval_rmse}
\end{table}

However, upon visualizing the predictive uncertainty of the GPR model in Figure \ref{14_21_July_2022_com} and \ref{17_July_2022_com}, we observed that several actual price values lay outside the posterior predictive confidence intervals. This empirical deviation from the expected coverage indicates the presence of significant noise and outliers in the training data. GPR assumes that observational noise is Gaussian, however,  the noise in the electricity price data are heavy-tailed. Additionally, the price data also has abrupt spikes and shocks which contributes to the narrow bounds. Notably, this issue was not associated with a systematically higher prediction error, in fact, the point predictions remained reasonable. But the theoretical coverage of the confidence interval is shrunk due to outliers or noise. To address this, we incorporated Support Vector Regression (SVR) as an auxiliary regression model. Unlike GPR, SVR does not model predictive uncertainty explicitly but focuses on minimizing structural risk through the concept of the \(\varepsilon\)-insensitive loss function which promotes robustness by ignoring small deviations within an \(\varepsilon\)-tube around the predictions. This mechanism inherently reduces the model’s sensitivity to outliers and noise in the training data. While outlier removal is a common pre-processing step, literature warns that improper exclusion may distort the underlying data distribution and compromise generalization~\cite{8786096,9523565,9218967}. Hence, using a model inherently robust to outliers is a statistically sound alternative. To provide reliable prediction intervals alongside SVR’s point prediction, we employed conformal prediction which is a distribution-free framework that constructs prediction intervals based on empirical residuals. This yields valid coverage under minimal assumptions, producing bounds that are responsive to actual prediction errors. We discuss these in detail in next sections.

\begin{figure}[ht!]
    \centering
    \centering
        \begin{subfigure}{0.8\linewidth}
            \includegraphics[height = 4.5cm, width = 12cm]{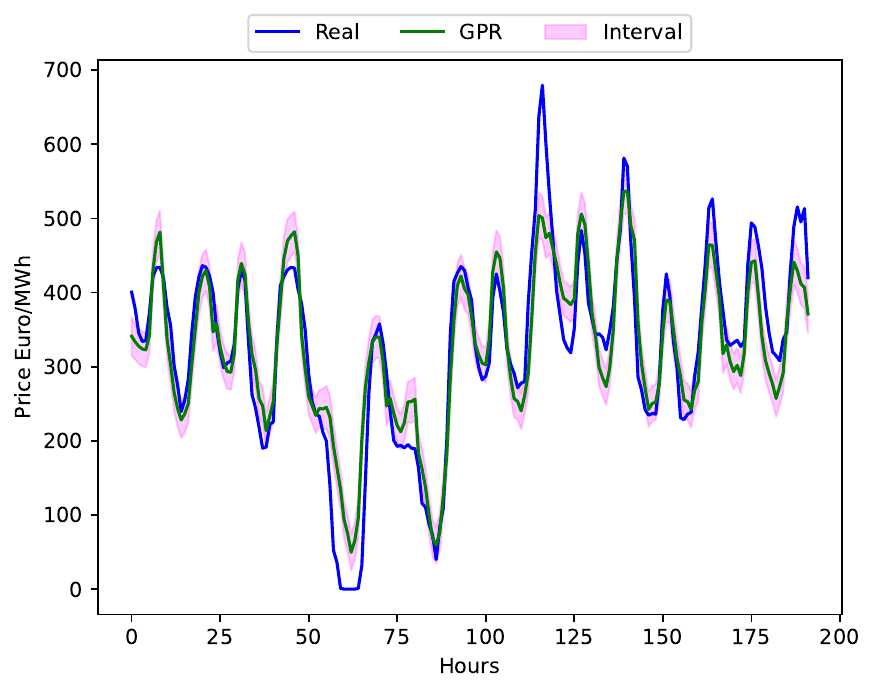}
            \caption{\centering{} }
            \label{14_21_July_2022_com}
        \end{subfigure}
        \begin{subfigure}{0.45\linewidth}
            \centering
            \includegraphics[height = 5cm, width = 6.5cm]{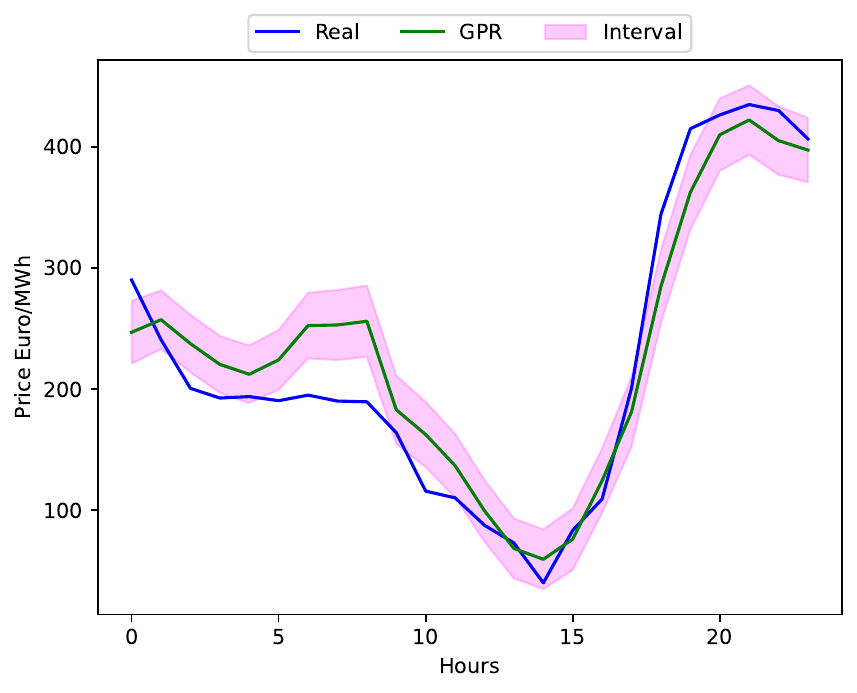}
            \caption{}
            \label{17_July_2022_com}
        \end{subfigure}
        \caption{\centering{Highlighting the predicted uncertainty via GPR: (a)For $3^{rd}$ week of July, 2022 (b) For July 17, 2022}}
        \label{interval_eval}
\end{figure}

\section{Support Vector Regression}
\label{S:SupportVectorRegression}

To overcome the problem of overfitting caused by the presence of outliers or noise in the training data, we use SVR to predict the price with the same dataset. SVR is effective when handling high-dimensional feature spaces and is resistant to overfitting, making it a suitable choice for our data \cite{ scholkopf2002learning}. Following on from the predictions via both GPR and SVR, we now combine the predictions by penalizing the predictions with higher errors, which improves the overall prediction, as discussed in Section \eqref{S:AHybridModel}.  
GPR predictions are associated with uncertainty, and hence give us information about the prediction interval based on the posterior variance. SVR prediction, on the other hand, is a statistic, and we cannot use this probabilistic approach to directly obtain the prediction interval. The solution is to apply SVR in the context of conformal prediction.

\subsection{Kernel Based Support Vector Regression}
\label{kernel based support vector regression}

Support vector machines (SVM) are supervised learning models based on margin maximization and are primarily used for classification problems \cite{suthaharan2016support}. However, a substantial literature also applies the SVM approach to regression problems, where it is referred to as support vector regression. The method is particularly useful when the relationship between the input variables and the output variables is non-linear. We provide a brief mathematical introduction below. For more information, see \cite{smola2004tutorial}.
\newline\newline
Let 
$T = \{\mathbf{t}_{1}, \cdots, \mathbf{t}_{n}\}$,
where each $t_i$ has dimension $d$, be the inputs, and 
$\{P_{1}, \cdots, P_{n}\}$ 
be the corresponding outputs. 
In our work, we have $d=3$. We aim to predict the output at 
$T^{*} = \{\mathbf{t}_{1}^{*}, \cdots, \mathbf{t}_{m}^{*}\}$. 
The basic idea of SVM is to find a function $f(\mathbf{t})$ for the input $\mathbf{t}$ that is $\epsilon$-deviated from the actual output. For simplicity, we first present the case where the input and output share a linear relationship. 
In this situation, the function can be formulated as follows:
\begin{equation}
    f(\mathbf{t}) = \left<\boldsymbol{w},\mathbf{t}\right>+b, \mathbf{t} \in T, b\in \mathbb{R},
\end{equation}
where $\boldsymbol{w}$ is a vector normal to the function and $\left<.,.\right>$ denotes the dot product. For $f(\mathbf{t})$ to be a suitable function, the difference between the real output $\{P_{i} \}_{i=1}^{n}$ and the value of the function $f(\mathbf{t}_{i}),i=1,\cdots,n$ can be up to $\epsilon$, and the $\boldsymbol{w}$ should be flat, which in the linear case means that it should be sufficiently small. 
This can be formulated as the following optimization problem:
\begin{equation}\label{svm_convex}
\begin{split}
    &\text{$\min$ $\frac{1}{2}\vert\vert \boldsymbol{w} \vert\vert^{2}$}\\ 
    & \text{subject to} \begin{cases}
        P_{i} \text{\textminus} \left<\boldsymbol{w},\mathbf{t}\right> \leq \epsilon \\
        \left<\boldsymbol{w},\mathbf{t}\right> \text{\textminus} P_{i} \leq \epsilon 
    \end{cases}
    .
\end{split}
\end{equation}
Problem (\ref{svm_convex}) is convex and we assume that it is feasible. Practically, the case of infeasibility may arise. In order to deal with this condition, we can introduce the slack variables $\xi$ and $\xi^{*}$, as in \cite{cortes1995support}, and the optimization problem (\ref{svm_convex}) can be reformulated as follows:
\begin{equation}\label{svm_convex_slack}
\begin{split}
    &\text{$\min_{\boldsymbol{w,b,\xi,\xi^{*}}}$ $\frac{1}{2}\vert\vert \boldsymbol{w} \vert\vert^{2} + C\Sigma_{i=1}^{n}(\xi_{i} + \xi_{i}^{*})$}\\ 
    & \text{subject to} \begin{cases}
        P_{i} \text{\textminus} \left<\boldsymbol{w},\mathbf{t}\right> \leq \epsilon + \xi_{i}\\
        \left<\boldsymbol{w},\mathbf{t}\right> \text{\textminus} P_{i} \leq \epsilon + \xi_{i}^{*}\\
        \xi_{i}, \xi_{i}^{*}\geq 0
    \end{cases}
\end{split}
\end{equation}
By using the Lagrangian method and constructing the dual of the problem (\ref{svm_convex_slack}), the problem can be solved as in \cite{mangasarian19651969}. If the input and output have a non-linear relationship, the function can be defined as follows:
\begin{equation}\label{kernel_trick}
    f(\mathbf{t}) = \sum_{i=1}^{n}(\alpha_{i} \text{\textminus} \alpha_{i}^{*})K_{SVR}(\mathbf{t}_{i}, \mathbf{t})+b,
\end{equation}
where $\alpha_{i}$ and $\alpha_{i}^{*}$ are the Lagrange multipliers for Problem (\ref{svm_convex_slack}) and $K_{SVR}(\mathbf{t}_{i},\mathbf{t})$ is a kernel. The formulation of the function in Equation (\ref{kernel_trick}) is often referred to as the \textit{kernel trick}.
When formulating the Lagrangian form of eq. \eqref{svm_convex_slack}, two more Lagrange multipliers appear, but they can be eliminated as in \cite[Section 1.3]{smola2004tutorial}. 
The kernel trick method is useful because the kernel $K_{SVR}$ then maps the $d-$dimensional input to a higher dimensional space in which the mapped input is approximately linear to the output, which reduces the complexity. 
The mapping of the input to higher dimensional data is implicit as follows:
\begin{equation}\label{dot_prod}
    K_{SVR}(m,n) = \left<\phi(m),\phi(n)\right>
\end{equation}
where $\phi:T\rightarrow \mathbb{R}^{l}$, and $l>>d$. It is important to note that the kernel must be in the form of a dot product of functions that map the input to higher dimensional space, as shown in eq. \eqref{dot_prod} \cite[Section 2.3, Theorem 2]{smola2004tutorial}. 
Since the kernel only depends on the dot product of the inputs, the explicit form of $\phi$ is not required.

\begin{figure}[ht!]
    \centering
    \includegraphics[height = 5cm, width = 12cm]{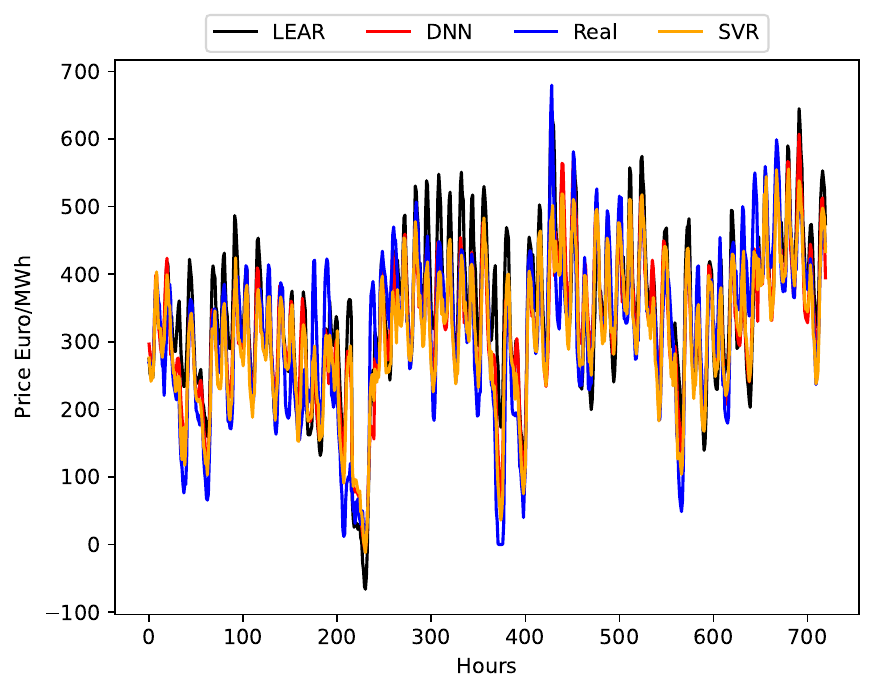}
    \caption{Support Vector Regression Comparison Predicted Prices for a Month of July, 2022}
    \label{svr_com_ker_july}
\end{figure}

\begin{figure}[ht!]
    \centering
    \includegraphics[height = 5cm, width = 12cm]{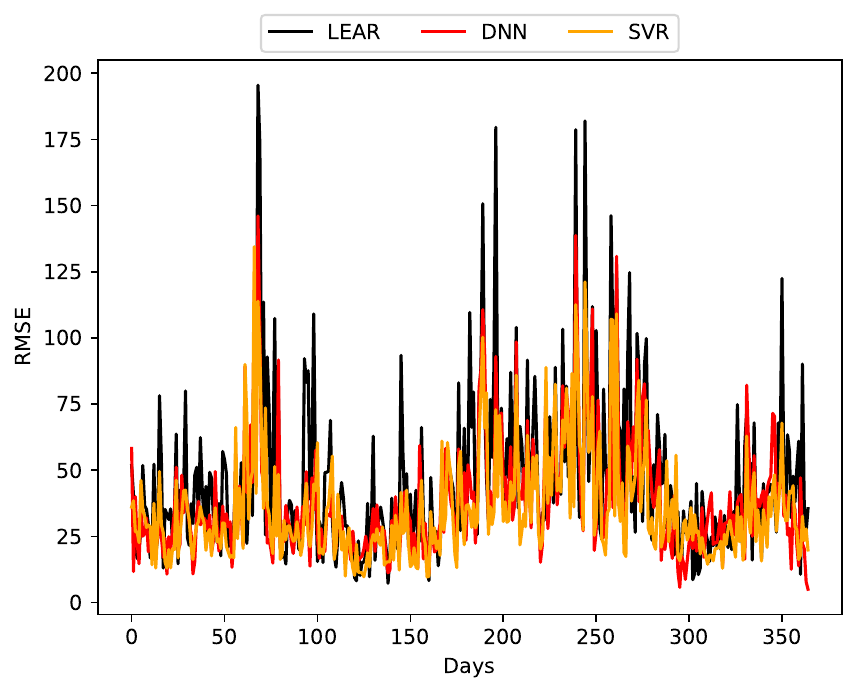}
    \caption{RMSE of Prices Predicted Via Support Vector Regression from January 1, 2022 to December 31, 2022}
    \label{SVR_RMSE_2022}
\end{figure}

Figure~\ref{svr_com_ker_july} presents the SVR-based electricity price predictions for July 2022, where the model parameters and kernel type were selected via grid search to ensure optimal performance. To evaluate the model's performance over the entire year, we compared the SVR predictions against the benchmark models across 2022. As shown in Figure~\ref{SVR_RMSE_2022} and Table~\ref{svr_eval_rmse}, SVR demonstrates improved performance relative to the benchmarks. Notably, the SVR model was trained using the same input-output structure as the GPR model to ensure consistency. However, Table~\ref{svr_eval_rmse} also indicates that GPR achieves higher point-prediction accuracy in terms of RMSE. Since our goal is not to compare SVR and GPR in isolation but to combine their strengths, the RMSE results support the use of a linear combination of their predictions for improved overall performance.

\begin{table}[ht!]
\centering
\caption{Errors for 2022}
\begin{tabular}{l c }
        \toprule
        Model & {RMSE}\\
        \midrule
        GPR      & 33.800\\
        LEAR      & 46.100\\
        DNN      & 38.171\\
        SVR & 35.496\\
        \bottomrule
        \end{tabular}
\label{svr_eval_rmse}
\end{table}


\subsection{Prediction Intervals for Support Vector Regression}
\label{SS:PredictionIntervals_SVR}

To compute the conformal predictions, we follow \cite{10.3150/21-BEJ1447}. 
Let $\{P_{1}, \cdots, P_{n}\}$ be the true values (prices). The corresponding predicted values using SVR are $\{\hat{P}_{1}, \cdots, \hat{P}_{n}\}$ for the inputs
$T=\{\mathbf{t}_{1}, \cdots, \mathbf{t}_{n}\}$. 
Similarly to GPR, the output at a new point $\mathbf{t}^{*}\in T^{*}$ is denoted by $P^{*}$. We want to construct the prediction intervals for these so-called non-conformity values. 
In simple terms, non-conformity values tell us how different a test point is compared with a set of training points \cite{10.3150/21-BEJ1447}. 
This difference is denoted by $\alpha$ and is calculated as follows:
\begin{equation}
\label{non-conformity}
    \alpha_{i} = |P_{i} \text{\textminus} \hat{P}_{i}|.
\end{equation}
Since we do not have access to the real value for $P^{*}$, on the basis of the training output, we assume that $\Tilde{P}_{j}$ belongs to the interval: 
\begin{equation*}
\Bigl[ 
P_{low} = \mu_{svr} \text{\textminus} \nu \sigma_{svr}, \;
P_{up}  = \mu_{svr} + \nu \sigma_{svr} 
\Bigr], 
\end{equation*}
where $\mu_{svr}$ and $\sigma_{svr}$ are the mean and standard deviation of the training outputs. 
Then: 
\begin{equation}
\label{non-conformity_new_alpha}
    \alpha^{*}_{j} = |P^{*} \text{\textminus} \Tilde{P}_{j}| 
    \text{ for } \Tilde{P}_{j} \in [P_{low}, P_{up}].
\end{equation}
We have chosen $j=1, \cdots, 500$ for $\Tilde{P}_{j}$, which we assume to be uniformly distributed between $P_{low} \text{ and } P_{up}$. 
For the $95$\% confidence interval, we compute the proportionality values, denoted by $\Gamma_{j}$, as follows:
\begin{equation}\label{p_value}
    \Gamma_{j} =\frac{\text{n}\{\alpha_{i}:\alpha_{i}\geq\alpha^{*}_{j}, i=1,\cdots,n\}}{n+1}, \text{  } \forall j = 1,\cdots, 500,
\end{equation}
where $\text{n}\{\cdot\}$ denotes a set's cardinality and $n$ is the number of training outputs.

Next, we construct a set $\Pi$ of values as follows:
\begin{equation*}
    \Pi = \left\{\Tilde{P}_{j}:\frac{\text{n}\{\alpha_{i}:\alpha_{i}\geq|P^{*} \text{\textminus} \Tilde{P}_{j}|, i=1,\cdots,n\}}{n+1}\geq 0.05, \forall j = 1,\cdots, 500\right\}.
\end{equation*} 
The prediction interval $I_{SVR}$ for $P^{*}$ is then given by:
\begin{eqnarray*}
    I_{SVR} & = & [lb_{SVR},ub_{SVR}] \hspace{0.5cm} \text{where} \\
    lb_{SVR} & = & \min(\Pi)\text{ and } ub_{SVR}=\max(\Pi)
\end{eqnarray*}
Note, we assume that the data are exchangeable, which is a weaker assumption than independent and identically distributed data.
\hfill\newline
Since the $\Tilde{P}_{j}$'s are drawn uniformly and the interval $I_{SVR}$ depends on them, we use bootstrapping and repeatedly generate $\Tilde{P}_{j}$'s to achieve the robustness of the interval. 
For each repetition, we obtain an interval
$I_{SVR}^{k} = [lb_{SVR}^{(k)}, ub_{SVR}^{(k)}], k=1,\cdots,s$.  
We then compute the average of the lower and upper bounds of these $I_{SVR}^{k}$'s as a final prediction interval, which is given by:
\begin{equation}\label{svr_interval_conformal}
    I_{SVR}^{BS} = [lb_{SVR}^{BS},ub_{SVR}^{BS}] \text{ where }
\end{equation}
$$lb_{SVR}^{BS} = \frac{1}{s}\sum_{k = 1}^{s}lb_{SVR}^{(k)} \text{ and } ub_{SVR}^{BS} = \frac{1}{s}\sum_{k = 1}^{s}ub_{SVR}^{(k)}$$
\begin{figure}[ht!]
    \centering
    \includegraphics[height = 5cm, width = 12cm]{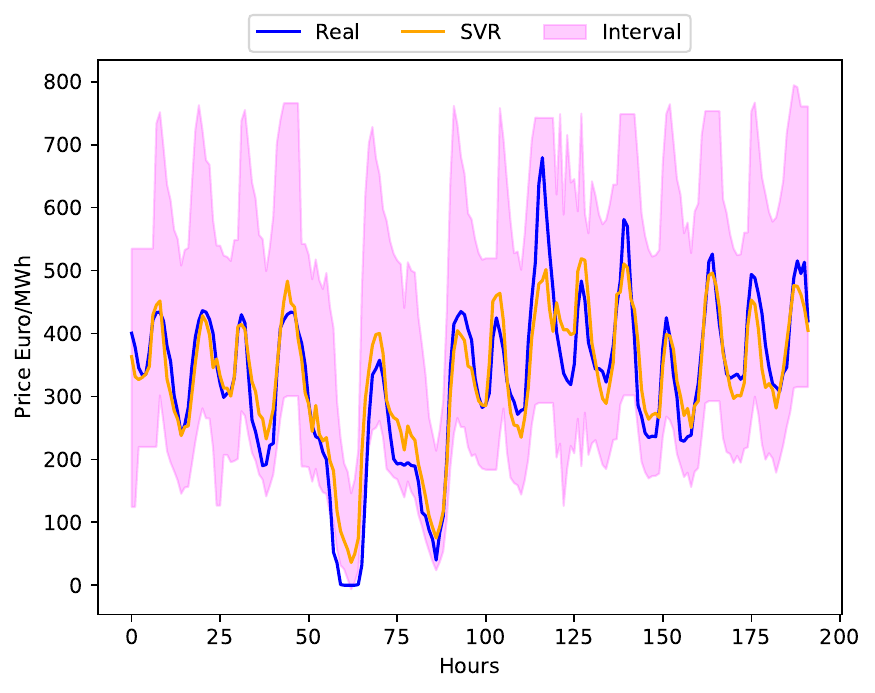}
    \caption{\centering{Highlighting the predicted uncertainty via SVR for $3^{rd}$ week of July, 2022} }
    \label{14_21_July_2022_svr}
\end{figure}
In Figure \ref{14_21_July_2022_svr} and \ref{17_July_2022_svr} we show estimated the lower and upper bounds for each price predicted via SVR for the $3^{rd}$ week of July, 2022, using the aforementioned approached from equation (\ref{non-conformity}) to (\ref{svr_interval_conformal}). On comparing this with the via GPR in Figure \ref{interval_eval} we can can conclude that the issue in the GPR case is eliminated. However, the bounds are wider indicates the higher uncertainty. 
\begin{figure}[ht!]
    \centering
    \includegraphics[height = 5cm, width = 6.5cm]{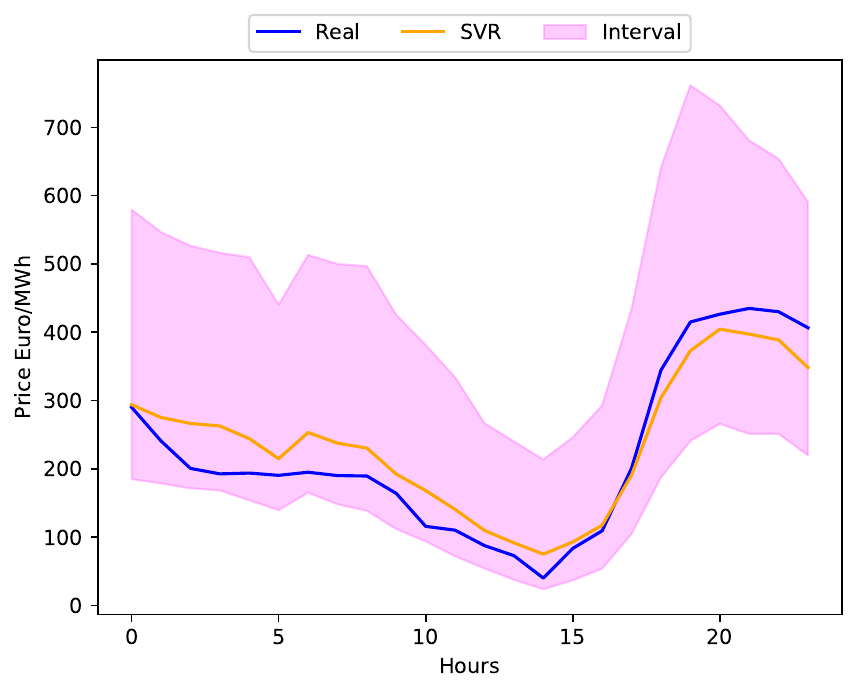}
    \caption{\centering{Highlighting the predicted uncertainty via SVR for July 17, 2022} }
     \label{17_July_2022_svr}
\end{figure}

At this point, it is clear that both GPR and SVR contribute to the prediction task, each with distinct strengths and limitations. GPR yields more accurate point estimates but produces less reliable uncertainty bounds, while SVR, though less accurate in point prediction, offers more robust interval estimates. This complementarity motivates the combination of both models to mitigate individual weaknesses and enhance predictive performance in both point estimation and interval coverage. 


\section{A Hybrid Model for Electricity Price Prediction}
\label{S:AHybridModel}

\subsection{The Hybrid Model}
\label{SS:TheHybridModel}

In this section, we propose a hybrid model that linearly combines GPR-based and SVR-based prediction. Since SVR is better at dealing with outliers \cite{chuang2011hybrid}, 
it helps to assign lesser weights to the GPR prediction when the predictions are affected by outliers in the training data. As mentioned in Section \ref{S:GaussianProcessRegression}, we have chosen a 248-dimensional input and we have the following training data:
\begin{equation}
\label{train_set}
\mathbf{P}_{\text{train}} 
= \{P_{\mathbf{t}_{i}},: \mathbf{t}_{i} \in T \subset \mathbb{R}^{248}, \forall i=1,\cdots n\}.
\end{equation}
where $$\mathbf{t}_{i} = [i, P^{(i-1)}, P^{(i-2)}, P^{(i-3)}, P^{(i-7)}, L^{(i)}, L^{(i-1)}, L^{(i-7)}, R^{(i)}, R^{(i-1)}, R^{(i-7)}, BD^{(i)}].$$
To predict the price, we define a hybrid model as the linear combination of individual predictions, which is given as follows:
\begin{equation}\label{final_model}
    \mathbf{P_{x^{*}}} = \lambda_{1}\underbrace{\left(\Sigma_{\mathbf{P}^{*},\mathbf{P}} \left(\Sigma_{\mathbf{P}} + \sigma_{n}\mathbf{I}\right)^{\text{\textminus}1}\mathbf{P}\right)}_{\text{Gaussian process regression}} + \lambda_{2}\overbrace{\left(\sum_{i=1}^{n}(\alpha_{i} \text{\textminus} \alpha_{i}^{*})K_{SVR}(\mathbf{x}_{i}, \mathbf{x})+b\right)}^{\text{Support vector regression}},
\end{equation}
where the first expression on the right-hand side denotes the GPR prediction and the second the SVR prediction. The parameters of the hybrid model are estimated individually. In this study we have chosen $\lambda_{1} = \lambda_{2} = 0.5$. The two prediction intervals of the individual models are combined using the weights $\lambda_{1}$ and $\lambda_{2}$ as follows:
\begin{equation}
    I_{GPR+SVR} = [lb_{com},ub_{com}],
\end{equation}
with:
\begin{align*}
lb_{com} &= \lambda_{1} \, lb_{GPR} + \lambda_{2} \, lb_{SVR}^{BS}, \\
ub_{com} &= \lambda_{1} \, ub_{GPR} + \lambda_{2} \, ub_{SVR}^{BS}.
\end{align*}
For the GPR expression in eq. \eqref{final_model}, the covariance matrix $\Sigma_{\mathbf{P}}$ is given by $K_{GPR}$, which is defined as the sum of the squared exponential covariance function and the rational quadratic function as follows:
\begin{equation}
\label{se+ratquad}
    \begin{split}
        K_{\text{GPR}}(\mathbf{t}_{i},\mathbf{t}_{j}) 
        &=  K_{se}(\mathbf{t}_{i},\mathbf{t}_{j}) + K_{rq}(\mathbf{t}_{i},\mathbf{t}_{j}) \\
        &= \sigma^{2}_{se}\exp\left(\text{\textminus}\frac{\vert\vert \mathbf{t}_{i} \text{\textminus} \mathbf{t}_{j} \vert\vert^{2}}{2\ell^{2}_{se}}\right) +  \sigma^{2}_{rq}\left(1+\frac{\vert\vert\mathbf{t}_{i} \text{\textminus} \mathbf{t}_{j}\vert\vert^{2}}{2\alpha \ell_{rq}}\right)^{\text{\textminus}\alpha}, 
    \end{split}
\end{equation}
where $\ell_{se}$ and $\ell_{rq}$ are the length scales for the squared exponential and the rational quadratic kernel, respectively, and $\sigma_{se}$ and $\sigma_{rq}$ are the variance parameters for the squared exponential and the rational quadratic kernel, respectively. These are the hyperparameters of Gaussian process models and are estimated using $\mathbf{P}_{\text{train}}$ via MLE. 
The marginal likelihood function $\mathbf{P_{\text{train}}}$ is given as follows:
\begin{equation}\label{MLPtrain}
    p(\mathbf{P}_{\text{train}} | T, \theta) = \mathcal{N}(\mathbf{P}_{\text{train}} | \mathbf{0}, K_{\text{GPR}} + \sigma_n^2 \mathbf{I}).
\end{equation}
Here, 
$\theta = (\sigma^2_{\text{se}}, \ell_{\text{se}}, \sigma^2_{\text{raq}}, \ell_{\text{raq}}, \alpha, \sigma_n^2)$, 
$\mathbf{I}$ is the identity matrix, and $\sigma^{2}_{n}$ is the variance of the noise term. It is important to include the noise term in the prior because we assume that the data are noisy.
The log marginal likelihood (LML) is given by:
\begin{equation}\label{LMLPtrain}
\begin{split}
    \log \left[p(\mathbf{P}_{\text{train}} | T, \theta)\right] = & \text{\textminus} \frac{1}{2} \mathbf{P}_{\text{train}}^{\mathrm{T}} (K_{\text{GPR}} + \sigma_n^2 \mathbf{I})^{\text{\textminus}1} \mathbf{P}_{\text{train}}\\
    & \text{\textminus} \frac{1}{2} \log \left| K_{\text{GPR}} + \sigma_n^2 \mathbf{I} \right| \text{\textminus} \frac{n}{2} \log 2\pi.
\end{split}
\end{equation}
By differentiating the log marginal likelihood with respect to the parameters individually and setting them to zero, we obtain parameter estimates.

\hfill\newline\newline 
For the SVR, we used the same input to train the model, but changed the kernel. 
GPR and SVR are both kernel-based approaches, but they use the kernel in different ways. 
In GPR, the kernel defines the smoothness and variability of the predicting distribution. 
In SVR, the kernel is used to transform the input space to a higher-dimensional space, in which the input and output share an approximately linear relationship. 
We chose the best-performing kernel from a set of kernels that comprises the squared exponential, polynomial, sigmoid, and linear kernels. 
We used a grid search to choose the kernel parameters for each prediction. 
To compute $b$ in the final model given in eq. \eqref{final_model}, we applied the Karush-Kuhn-Tucker (KKT) condition \cite{karush1939minima}. 
For a detailed derivation of $b$, please refer to \cite{lin2005library}.


\section{Numerical Results}
\label{S:NumericalResults}

\subsection{Predictions made with the Hybrid Model}
\label{SS:PredictionsHybri Model}

We tested our new hybrid model on German power prices for the years 2021 to 2023. We focused on each year individually considering each year having different market trends \cite{PierreSchneider2024}. The seasonal comparison (summer, winter, autumn, and spring) was only conducted for all three years, however, we only present the plots for 2023 in this section and for rest of the years we have put the plots in the Appendix \ref{A:SeasonalComparisons}. The GPR and SVR models were trained using the past $365$ days of data. The training data were constructed as outlined in eq. \eqref{train_set}. We denote the training input and training output as $\text{train}_{in}$ and $\text{train}_{out}$, respectively:
\begin{eqnarray*}
\text{train}_{in} = \left( \mathbf{t}_{1}, \cdots, \mathbf{t}_{365} \right)^{\top} 
\text{ and } 
\text{train}_{out} = \left( P^{(1)}, \cdots, P^{(365)} \right)^{\top}.
\end{eqnarray*}
The hyperparameters of the Gaussian process are estimated using MLE, as shown in eq. \eqref{LMLPtrain}, based on $\text{train}_{in}$ and $\text{train}_{out}$. 
Since the training data are transformed, the model parameters are learned on this transformed scale, and predictions are subsequently re-transformed back to the original scale.
For SVR the same training data are used to choose the parameters for the squared exponential kernel using a grid search. For the search process, we chose $\epsilon \in [0.001,0.1,0.1]$, $C\in [0.1,1,10]$, $\hat{c} \in [0,1,10]$. The kernel for SVR training is also chosen via grid search where the potential kernels are \{squared exponential, polynomial, linear, sigmoid\}. Both trained GPR and SVR models are used to predict the next $48$ data points, i.e., two days, using the same prediction inputs denoted here as $\text{pred}_{in}$.
\begin{figure}[ht!]
    \centering
    \centering
        \begin{subfigure}{0.8\linewidth}
            \includegraphics[height = 5cm, width = 12cm]{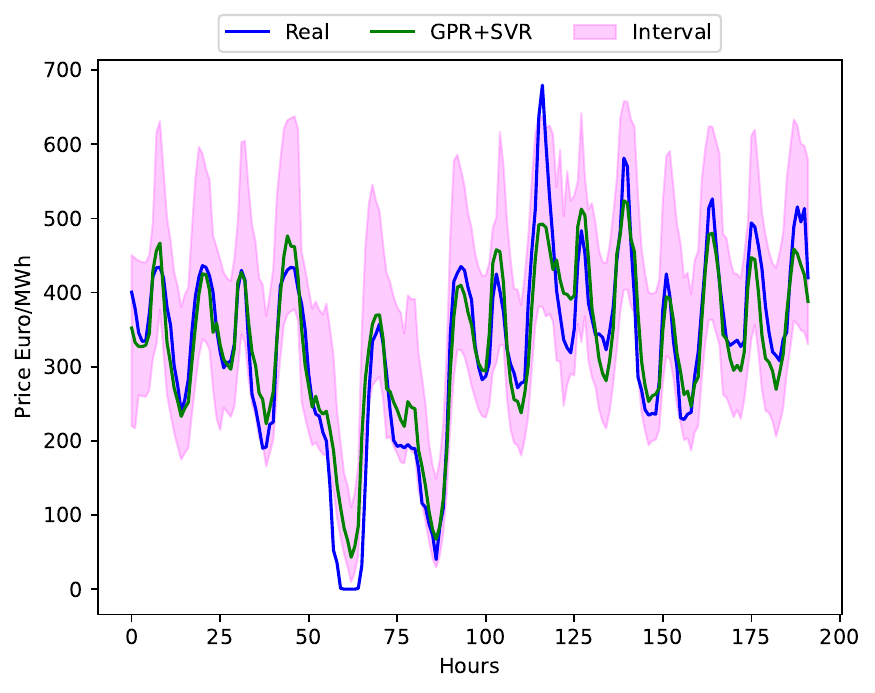}
            \caption{\centering{} }
            \label{14_21_July_2022_gpr_svr}
        \end{subfigure}
        \begin{subfigure}{0.45\linewidth}
            \centering
            \includegraphics[height = 5cm, width = 6.5cm]{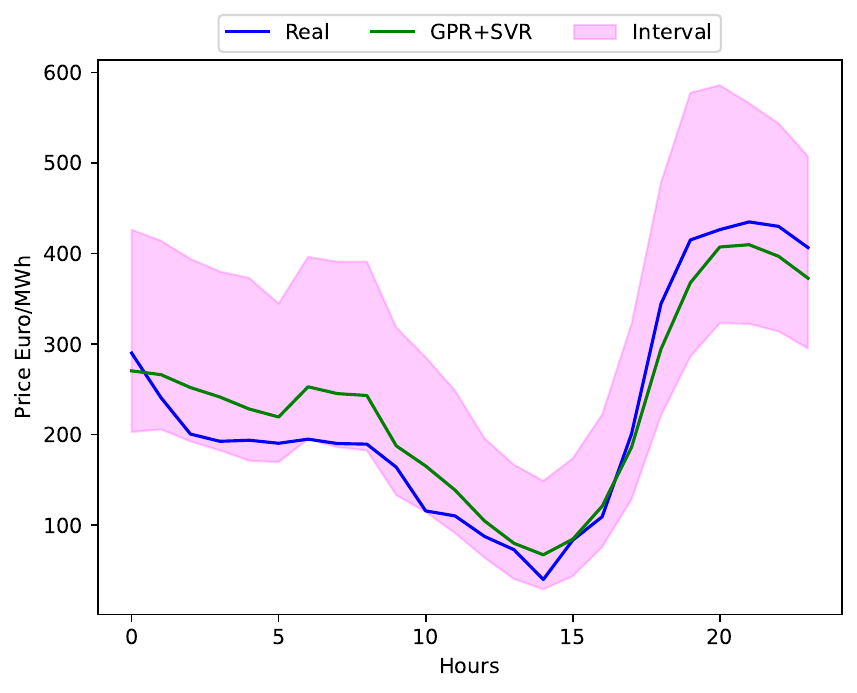}
            \caption{}
            \label{17_July_2022_gpr_svr}
        \end{subfigure}
        \caption{\centering{Highlighting the predicted uncertainty via GPR: (a)For $3^{rd}$ week of July, 2022 (b) For July 17, 2022}}
        \label{interval_eval_gpr_svr}
\end{figure}

For example, if we are predicting the price for July 17, 2022, we take the hourly data from July 17, 2021 to July 16, 2022 as a vector of $365$ data points and predict the prices for July 17, 2022. Each data points in the vector is of dimension 248 and are designed as described in \ref{SS:TheHybridModel}.
\begin{eqnarray*}
\text{pred}_{in} = \left[ \mathbf{t}_{366} \right] 
\text{ and } 
\text{pred}_{out} = \left[ P^{(366)} \right]
\end{eqnarray*}
For each 24 hour we calibrate the model and and predict the individually which gives 24 predictions for a day. The same process is repeated for the next day and this continues till the prediction horizon.

For a visual comparison, we show the prediction for 8 days and for one-day prices (consecutive days chosen at random) in Figures \ref{14_21_July_2022_gpr_svr} and \ref{17_July_2022_gpr_svr}, respectively. These Figures compare the real price with the predictions from the hybrid model.

\begin{figure}[ht!]
        \centering
        \begin{subfigure}{0.5\linewidth}
            \includegraphics[height = 5cm, width = 7cm]{ 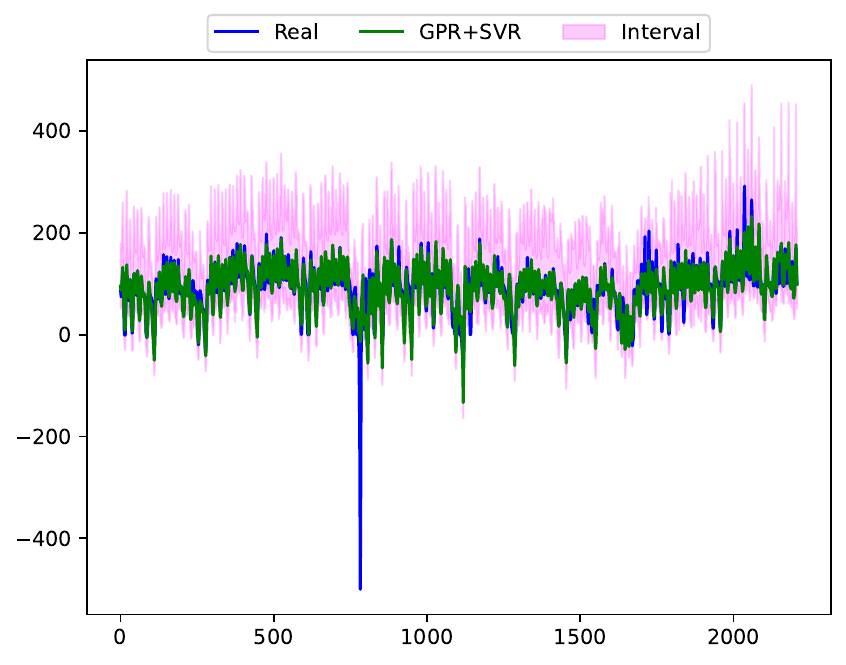}
            \caption{}
            \label{summer}
        \end{subfigure}
        \begin{subfigure}{0.45\linewidth}
            \includegraphics[height = 5cm, width = 7cm]{ 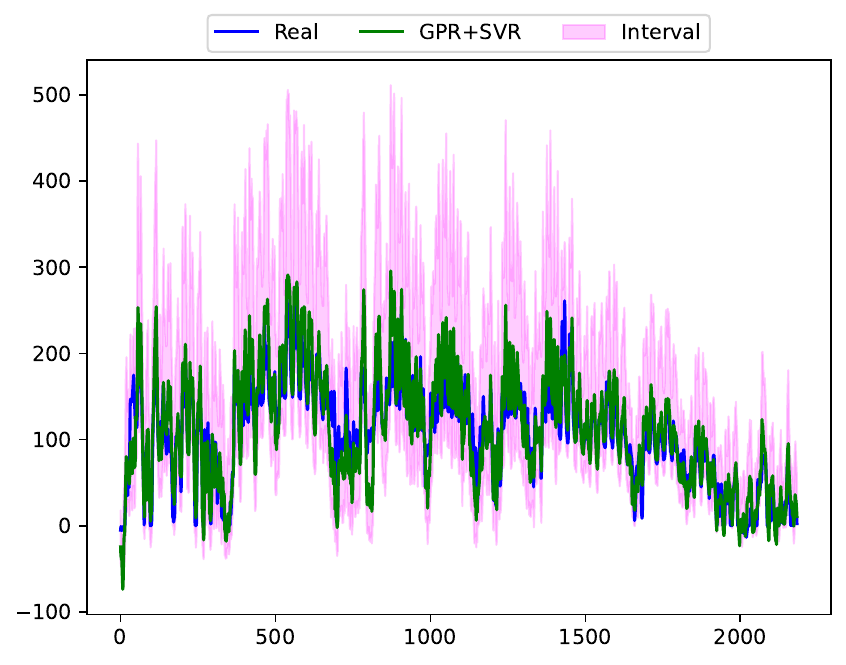}
            \caption{}
            \label{winter}
        \end{subfigure}
        \begin{subfigure}{0.5\linewidth}
            \includegraphics[height = 5cm, width = 7cm]{ 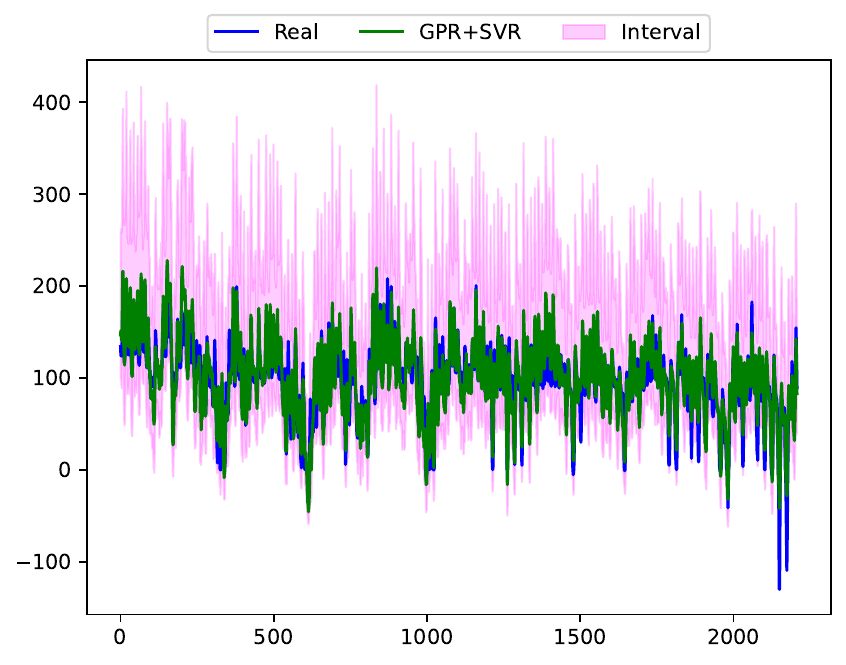}
            \caption{}
            \label{spring}
        \end{subfigure}
        \begin{subfigure}{0.45\linewidth}
            \includegraphics[height = 5cm, width = 7cm]{ 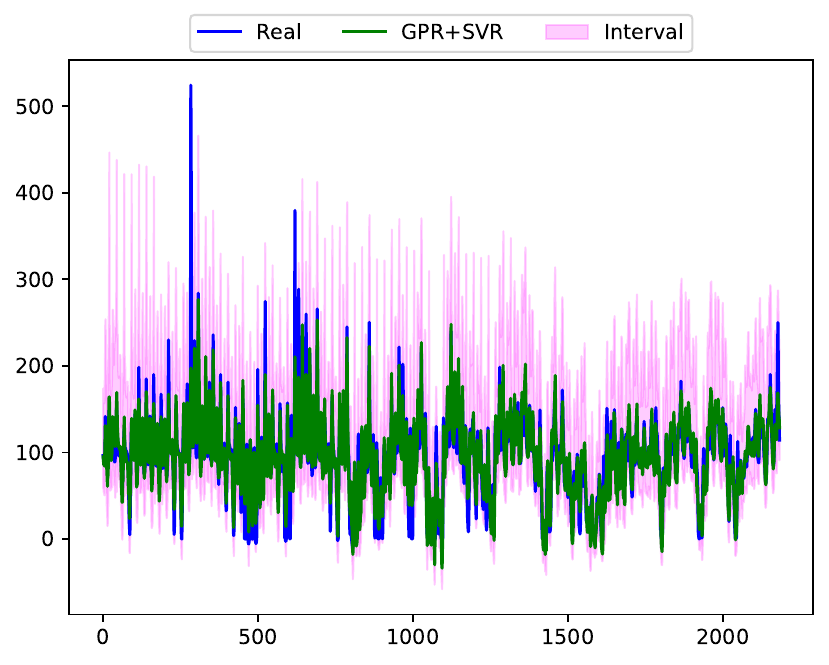}
            \caption{}
            \label{autumn}
        \end{subfigure}
        \caption{\centering{Seasonal Comparison: (a) Summer of 2023}, (b) Winter of 2023, (c) Spring of 2023 and (d) Autumn of 2023}
        \label{seasonal prediction}
\end{figure}

However, for the year 2023, model performance is evaluated using $365$ days, i.e., from January 1, 2023 to December 31, 2023. In Figure \ref{seasonal prediction}, we show the prediction based on the hybrid model for four seasons in Germany, namely summer (June to August), winter(January, February, and December of the same year), autumn (September to November), and spring (March to May) for year 2023. From Figures \ref{summer}, \ref{winter}, \ref{spring}, \ref{autumn}, we see that, on average, the predictions are aligned with the real data. For a few days in each season, the prediction for some time points is comparatively worse than the price predictions in neighboring time points, however, such cases are not repetitive. This is particularly evident when the training data suddenly have very high or very low prices for very short intervals. In such cases, the model cannot capture the pattern within the short interval and the prediction is relatively poor. 


\subsection{Error Analysis}
\label{SS:ErrorAnalysis}

The performance of our hybrid model is evaluated by comparing its point predictions with the benchmark models in Section \ref{s:benchmark}. The evaluation was carried out over three distinct forecasting years: 2021, 2022, and 2023. The root mean square error (RMSE), mean absolute error (MAE), mean absolute percentage error and symmetric mean absolute error (sMAPE) were used to evaluate the model performance and are defined here as follows:
\begin{align}
    \text{RMSE}_{i} = \sqrt{ \frac{1}{m} \sum_{j=1}^{m} \left(P_{j} \text{\textminus} \hat{P_{j}} \right)^2}\hspace{0.3cm} 
    &\text{ and } \hspace{0.3cm} 
    \text{Error\_Score(RMSE)} = \frac{1}{N}\sum_{i=1}^{N} \text{RMSE}_{i}, \label{e:rmse}\\
    \text{MAE}_{i} = \frac{1}{m}\sum_{j=1}^{m}|P_{j} \text{\textminus} \hat{P}_{j}|\hspace{0.3cm} 
    &\text{ and } \hspace{0.3cm} 
    \text{Error\_Score(MAE)} = \frac{1}{N}\sum_{i=1}^{N} \text{MAE}_{i}\label{e:mae}.\\
    \text{MAPE}_{i} = \sqrt{ \frac{1}{m} \sum_{j=1}^{m} \left\vert \frac{P_{j} \text{\textminus} \hat{P_{j}}}{P_{j}} \right\vert}\hspace{0.3cm} 
    &\text{ and } \hspace{0.3cm} 
    \text{Error\_Score(MAPE)} = \frac{1}{N}\sum_{i=1}^{N} \text{MAPE}_{i}, \label{e:mape}\\
    \text{sMAPE}_{i} = \sqrt{ \frac{1}{m} \sum_{j=1}^{m}  2*\frac{\left\vert P_{j} \text{\textminus} \hat{P_{j}}\right\vert}{\vert P_{j}\vert + \vert \hat{P}_{j}\vert }  }\hspace{0.3cm} 
    &\text{ and } \hspace{0.3cm} 
    \text{Error\_Score(sMAPE)} = \frac{1}{N}\sum_{i=1}^{N} \text{sMAPE}_{i}, \label{e:smape}
\end{align}

Here, $P_{j}$ and $\hat{P_{j}}$ are the real and predicted values of hour $j$ of day $i$, while $m$ and $N$ are the number of days and the number of hours in a day.
While descriptive metrics offer insight into model performance, they do not provide information on the statistical significance of the differences in predictive accuracy. Hence, to supplement the error analysis with inferential statistical evidence, we employ formal hypothesis testing. Specifically: 
\begin{itemize}
    \item \textbf{Diebold–Mariano (DM) Test:} To assess the pairwise predictive accuracy between models, we utilize the Diebold–Mariano test. This test evaluates the null hypothesis that two forecasting models have equal expected loss, based on the difference in prediction errors over time. It is applied individually for each of the three forecast years to compare the hybrid model (GPR+SVR) against each baseline model.
    
    \item \textbf{Friedman Test with Nemenyi Post-hoc Analysis:} To evaluate and rank multiple forecasting models simultaneously across different datasets or time periods, we further employ the Friedman test — a non-parametric statistical test for detecting differences in the median ranks of several related models. In cases where the null hypothesis of the Friedman test is rejected, we apply the Nemenyi post-hoc test to perform pairwise comparisons between models, identifying which differences are statistically significant. These analyses are reported in Section~\ref{subsec:fr_test}.
\end{itemize}

The inclusion of these statistical tests ensures that claims of superior model performance are not merely based on numerical metrics but are backed by statistically significant evidence. This dual-layered evaluation through both error metrics and hypothesis testing enhances the robustness and credibility of the proposed methodology.

The evaluation of intervals were done using the Prediction Interval Coverage Probability (PICP) and Mean Prediction Interval Width (MPIW). PICP measures the proportion of true target values that fall within the corresponding predictive intervals. Formally, for $n$ predictions, the PICP is defined as:

\begin{equation}
\text{PICP} = \frac{1}{24} \sum_{h=1}^{24} \mathbb{I}\left( P^{(i)}_{h} \in [\hat{L}, \hat{U}] \right),
\end{equation}

where $P^{(i)}_{h}$ is the true price of $h^{th}$ hour of day $i$, $\hat{L}$ and $\hat{U}$ denote the lower and upper bounds of the estimated prediction interval, and $\mathbb{I}(\cdot)$ is the indicator function that equals 1 if the condition is true and 0 otherwise. A well-calibrated model with nominal coverage level $\alpha$ (e.g., $95\%$) should achieve $\text{PICP} \approx \alpha$.

where as MPIW captures the average width of the prediction intervals and reflects their sharpness. It is computed as:

\begin{equation}
\text{MPIW} = \frac{1}{24} \sum_{h=1}^{24} \left( \hat{U} -\hat{L} \right).
\end{equation}

Lower values of MPIW indicate tighter intervals, which are desirable provided that the coverage (PICP) is not compromised. Together, PICP and MPIW offer a balanced evaluation of the model's uncertainty estimates in terms of both calibration and informativeness.
  
In our study, for each year 2021 to 2023, we have $m=24$ and $N=365$, which gives the error scores for daily prices. Across all metrics and forecast years, the proposed GPR+SVR model consistently ranks among the top-performing models. For the visual comparison the error plots using RMSE is only shown, however, the evaluation is carried out using all of the above mentioned error metrics. As shown in Table \ref{mae_error}, it achieves the lowest MAE in all three years, outperforming even its constituent models (GPR and SVR), as well as the benchmarks: DNN and LEAR. The improvement over LEAR and DNN is particularly noticeable in 2022, where the GPR+SVR model reduces MAE by approximately 29\% and 12.2\%, respectively. The numerical values are also supported by the Figure \ref{RMSE_2023}, \ref{RMSE_2022} and \ref{RMSE_2021}.

\begin{figure}[ht!]
    \centering
    \includegraphics[height = 4.5cm, width = 11cm]{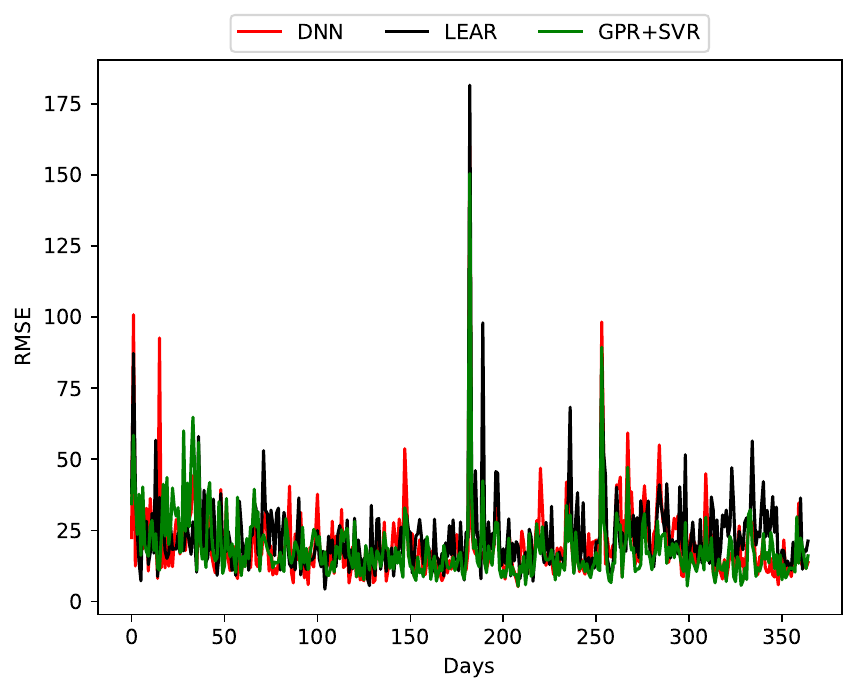}
        \caption{RMSE for 2023}
        \label{RMSE_2023}
\end{figure}
\begin{figure}[ht!]
    \centering
    \includegraphics[height = 4.5cm, width = 11cm]{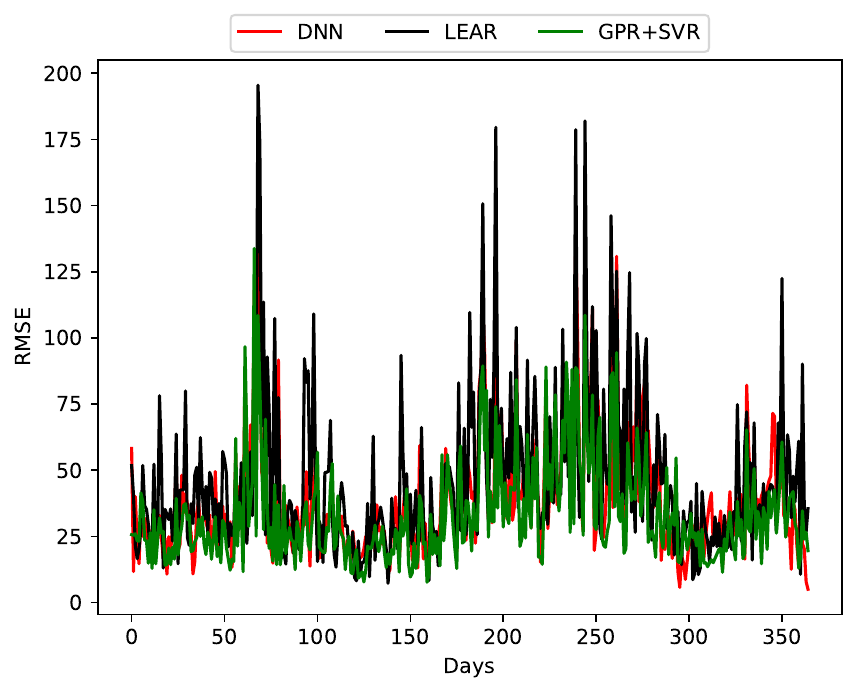}
        \caption{RMSE for 2022}
        \label{RMSE_2022}
\end{figure}
\begin{figure}[ht!]
    \centering
    \includegraphics[height = 4.5cm, width = 11cm]{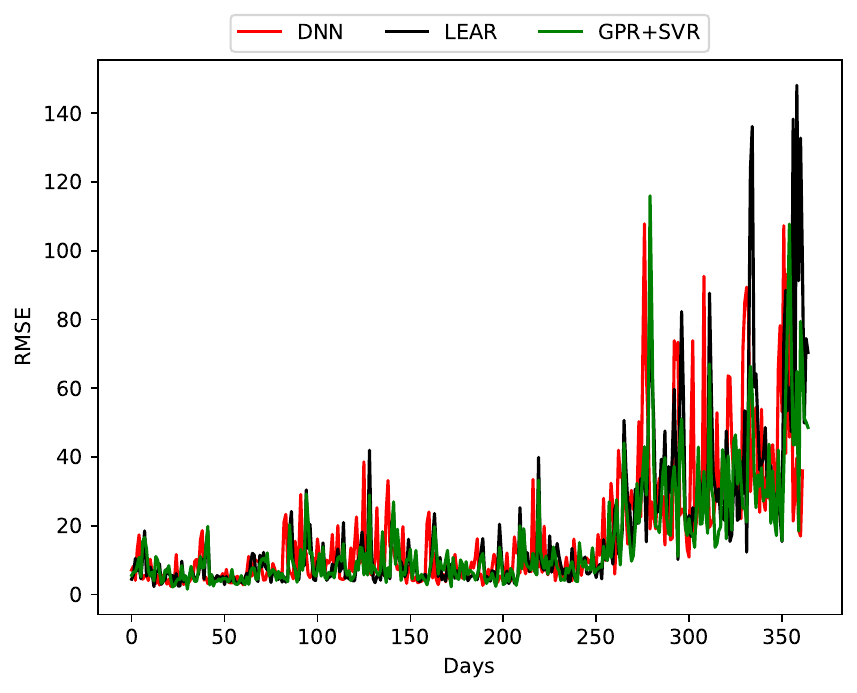}
        \caption{RMSE for 2021}
        \label{RMSE_2021}
\end{figure}

In terms of RMSE, which penalizes larger errors more severely, the hybrid model again outperforms all comparators across all years. Table \ref{rmse_error} indicate that the hybrid not only mitigates outlier error impact more effectively than either GPR or SVR alone, but also outperforms the DNN and LEAR methods, particularly in 2022 where the RMSE reduction is as high as 28.2\% compared to LEAR.

The percentage-based metrics, MAPE and SMAPE, further validate the superiority of the hybrid model. Although GPR and SVR show individually competitive performance in some years, their average is more stable and consistently better. For instance, the GPR+SVR model achieves the lowest SMAPE in all years c.f. Table \ref{mape_error} and \ref{smape_error}, outperforming all other methods including the baselines. Interestingly, the hybrid approach demonstrates substantial robustness across different data scenarios (2021–2023), suggesting that it effectively balances the strengths of GPR (typically better in capturing smooth, continuous patterns) and SVR (often more robust to outliers and noise).

These findings support a central theoretical argument: that model averaging, when performed over base learners with complementary strengths, can lead to generalization performance that is statistically superior to — or at the very least, not worse than — any single constituent model. In our case, the averaging scheme is extremely simple (unweighted arithmetic mean), yet effective, because the structural assumptions of GPR and SVR are fundamentally different. This diversity is known to reduce correlated errors and stabilize predictions, as reflected in the lower variance and bias of error metrics across all years.

However, it must be acknowledged that the performance advantage of the GPR+SVR model is marginal in some scenarios. For instance, while the SMAPE reduction from 0.286 (GPR) to 0.279 (GPR+SVR) in 2023 may seem small, it is consistent across metrics and repeated across years — which indicates robustness rather than sporadic outperformance.

These numerical evaluations, although highly encouraging, are insufficient alone to establish statistical superiority. Therefore, we complement these findings with rigorous hypothesis testing using the Diebold-Mariano test (see Section~\ref{subsec:DM_tes}) and the Friedman test with Nemenyi post-hoc comparisons (see Section~\ref{subsec:fr_test}). The statistical tests reinforce the insights derived from error metrics and offer evidence that the improvements are not merely by chance, especially in comparisons with LEAR and DNN.
\begin{table}[ht!]
\centering
\caption{Mean Absolute Error}
\begin{tabular}{l c c c}
    \toprule
    Model & {Error 2023} & {Error 2022} & {Error 2021} \\
    \midrule
    GPR     & 15.378 & 28.484 & 13.227 \\
    LEAR    & 19.241  & 39.130  & 15.271 \\
    DNN     & 16.285 & 31.637  & 14.333 \\
    SVR     & 16.468  & 29.579 & 13.819 \\
    GPR+SVR & 15.156 & 27.777 & 12.912  \\
    \bottomrule
\end{tabular}
\label{mae_error}
\end{table}
\begin{table}[ht!]
\centering
\caption{Root Mean Squared Error}
\begin{tabular}{l c c c}
    \toprule
    Model & {Error 2023} & {Error 2022} & {Error 2021} \\
    \midrule
    GPR     & 18.864  & 33.800  & 15.833  \\
    LEAR    & 22.741 & 46.100  & 18.069  \\
    DNN     & 19.993 & 38.171   & 17.205  \\
    SVR     & 20.170 & 35.496  & 16.734  \\
    GPR+SVR & 18.548  & 33.095   & 15.530  \\
    \bottomrule
\end{tabular}
\label{rmse_error}
\end{table}
\begin{table}[ht!]
\centering
\caption{Mean Absolute Percentage Error}
\begin{tabular}{l c c c}
    \toprule
    Model & {Error 2023} & {Error 2022} & {Error 2021} \\
    \midrule
    GPR     & 10.622 & 6.3481  & 4.912  \\
    LEAR    & 14.123  & 13.651 & 7.035  \\
    DNN     & 13.509 & 7.236  & 5.773  \\
    SVR     & 11.473 & 8.214  & 8.342  \\
    GPR+SVR & 10.561 & 7.063  & 6.579  \\
    \bottomrule
\end{tabular}
\label{mape_error}
\end{table}
\begin{table}[ht!]
\centering
\caption{Symmetric Mean Absolute Percentage Error}
\begin{tabular}{l c c c}
    \toprule
    Model & {Error 2023} & {Error 2022} & {Error 2021} \\
    \midrule
    GPR     & 0.286 & 0.195 & 0.179 \\
    LEAR    & 0.30 & 0.243 & 0.189  \\
    DNN     & 0.292 & 0.206 & 0.183 \\
    SVR     & 0.292& 0.199 & 0.179 \\
    GPR+SVR & 0.279& 0.190 & 0.173  \\
    \bottomrule
\end{tabular}
\label{smape_error}
\end{table}
\newpage

To assess the quality of uncertainty quantification in our forecasts, we computed the Prediction Interval Coverage Probability (PICP) and the Mean Prediction Interval Width (MPIW) for each model over the years 2021–2023. As shown in Table~\ref{tab:picp_mpiw}, the proposed GPR+SVR hybrid model strikes a strong balance between interval sharpness and reliability. 

In 2023, the hybrid model achieves a PICP of 97.5\%, which is nearly as high as SVR's 97.7\% but with a substantially narrower average interval width (MPIW of 148.5 compared to 244.6). GPR, on the other hand, produces very narrow intervals (MPIW = 52.4) but suffers from undercoverage (PICP = 82.6\%). This pattern is consistent across all years. In 2022, GPR's coverage collapses to 57.6\%, whereas GPR+SVR still maintains 89.7\% coverage with more moderate interval width. Similarly, in 2021, the hybrid model improves coverage from GPR’s 74.1\% to 85.5\%, while still producing narrower intervals than SVR. 

These results clearly demonstrate that the hybrid approach yields more reliable and informative uncertainty bounds than either constituent model alone. By leveraging the conservative tendencies of SVR and the sharper estimates of GPR, the hybrid model effectively avoids both under- and overconfidence, leading to well-calibrated prediction intervals that are particularly valuable for risk-sensitive applications such as electricity market operations.
\begin{table}[ht!]
\centering
\caption{Prediction Interval Evaluation: PICP and MPIW (2021–2023)}
\begin{tabular}{lcccccc}
\toprule
\multirow{2}{*}{Model} & \multicolumn{2}{c}{\textbf{2023}} & \multicolumn{2}{c}{\textbf{2022}} & \multicolumn{2}{c}{\textbf{2021}} \\
\cmidrule(r){2-3} \cmidrule(r){4-5} \cmidrule(r){6-7}
 & PICP & MPIW & PICP & MPIW & PICP & MPIW \\
\midrule
GPR       & 0.8260 & 52.35  & 0.5758 & 52.34  & 0.7410 & 29.07 \\
SVR       & 0.9774 & 244.55 & 0.9239 & 325.73 & 0.8686 & 105.69 \\
GPR+SVR   & 0.9755 & 148.45 & 0.8971 & 189.04 & 0.8550 & 67.38 \\
\bottomrule
\end{tabular}
\label{tab:picp_mpiw}
\end{table}

\subsubsection{Diebold–Mariano Test for Forecast Accuracy Comparison}
\label{subsec:DM_tes}

To formally assess whether the proposed GPR+SVR ensemble yields statistically significant improvements in predictive accuracy, we employed the \textbf{Diebold–Mariano (DM) test} \cite{Diebold01012002}, which is specifically designed for pairwise comparison of forecast errors while accounting for potential autocorrelation and heteroscedasticity in time series data.

For each year (2021–2023), we conducted independent DM tests comparing the proposed GPR+SVR model against each baseline model. The test evaluates the null hypothesis \( H_0 \): that the two competing forecasts have equal expected loss, using a chosen loss differential function (e.g., squared error or absolute error). The results for each year are provided in Tables~\ref{tab:dm2021}, \ref{tab:dm2022}, and \ref{tab:dm2023}.

\begin{table}[ht!]
\centering
\caption{Diebold-Mariano Test for 2023}
\begin{tabular}{l c c }
    \toprule
    Model & {DM Statistic} & {P-Value}  \\
    \midrule
    GPR+SVR Vs GPR     & -0.687 & 0.491 \\
    GPR+SVR Vs LEAR    & -4.844 & $1.268 \times 10^{-6}$   \\
    GPR+SVR Vs DNN     & -1.409 & 0.158  \\
    GPR+SVR Vs SVR     & -4.132& 3.584 $\times 10^{-5}$  \\
    \bottomrule
\end{tabular}
\label{tab:dm2023}
\end{table}

\begin{table}[ht!]
\centering
\caption{Diebold-Mariano Test for 2022}
\begin{tabular}{l c c }
    \toprule
    Model & {DM Statistic} & {P-Value}  \\
    \midrule
    GPR+SVR Vs GPR     & -1.334 & 0.181 \\
    GPR+SVR Vs LEAR    & -6.752 & $1.455 \times 10^{-11}$   \\
    GPR+SVR Vs DNN     & -3.175 & 0.0014  \\
    GPR+SVR Vs SVR     & -3.472& 0.00051  \\
    \bottomrule
\end{tabular}
\label{tab:dm2022}
\end{table}

\begin{table}[ht!]
\centering
\caption{Diebold-Mariano Test for 2021}
\begin{tabular}{l c c }
    \toprule
    Model & {DM Statistic} & {P-Value}  \\
    \midrule
    GPR+SVR Vs GPR     & -0.864 & 0.387 \\
    GPR+SVR Vs LEAR    & -2.766 & 0.0056   \\
    GPR+SVR Vs DNN     & -1.938 & 0.052  \\
    GPR+SVR Vs SVR     & -2.454& 0.0141  \\
    \bottomrule
\end{tabular}
\label{tab:dm2021}
\end{table}

The DM test results show a clear pattern:

\begin{itemize}
    \item When compared with LEAR, the GPR+SVR model exhibits statistically significant improvement in all three years (\(p < 0.05\)), with extremely low p-values indicating high confidence in rejecting the null hypothesis.
    \item For the comparison against DNN, while the p-values are not always below the conventional 0.05 threshold, they are consistently below 0.2 across all years and below 0.1 in two of the three years, indicating a consistent trend in favor of the proposed model.
\end{itemize}

Given the challenges of high variance and small sample sizes in yearly forecast evaluations, strict reliance on \( p < 0.05 \) may be overly conservative and lead to Type II errors (failing to detect a true difference). Therefore, the observed directional consistency across years and metrics, in conjunction with low p-values, strongly supports the hypothesis that GPR+SVR outperforms the benchmark models in a meaningful and repeatable way.

\subsubsection{Comparison with Individual Base Models (GPR and SVR)}

The proposed GPR+SVR ensemble is constructed by averaging the outputs of two strong individual regressors. Therefore, comparisons against GPR and SVR individually are expected to show smaller differences. The DM test results reflect this with all p-values above 0.05 — which is a theoretically expected outcome.

This outcome does not undermine the validity of the ensemble. Instead, it suggests that the ensemble maintains or slightly improves the accuracy of its constituent models, with the added benefit of potentially reducing variance and model-specific bias. The absence of statistically significant difference from its components demonstrates that the ensemble does not degrade performance and offers robust aggregation of two diverse regression paradigms.

\subsubsection{Non-parametric Comparison Using Friedman and Nemenyi Tests}
\label{subsec:fr_test}
To further corroborate the findings of the Diebold–Mariano test, a Friedman test \cite{JMLR:v7:demsar06a} was conducted using average model rankings across multiple forecast years and error metrics. The null hypothesis of equal average ranks across all models was rejected, justifying pairwise post-hoc comparisons using the Nemenyi test (see Table~\ref{tab:nemenyi}). The GPR+SVR hybrid achieved significantly better rankings than the ensemble benchmarks (LEAR and DNN) as well as the SVR component, with p-values far below the 0.05 significance threshold. The difference with the GPR model, while numerically favorable to the hybrid, was not statistically significant (p = 0.0551), a pattern consistent with the Diebold–Mariano results.

\begin{table}[ht!]
\centering
\caption{Pairwise Nemenyi Post-hoc Test p-values (Friedman Test Ranks across Forecast Years and Metrics)}
\begin{tabular}{lccccc}
\toprule
 & \textbf{GPR} & \textbf{SVR} & \textbf{LEAR} & \textbf{DNN} & \textbf{GPR+SVR} \\
\midrule
\textbf{GPR} & 1.000 & 0.0024 & $9.98 \times 10^{-9}$ & 0.147 & 0.0551 \\
\textbf{SVR} &   0.0024 & 1.000 & 0.100 & 0.655 & $2.24 \times 10^{-9}$ \\
\textbf{LEAR} & $9.98 \times 10^{-9}$  &   0.100    & 1.000 & 0.00128 & $1.11 \times 10^{-16}$ \\
\textbf{DNN} &  0.147  &   0.655    &  0.00128     & 1.000 & $6.11 \times 10^{-6}$ \\
\textbf{GPR+SVR} &  0.0551  &   $2.24 \times 10^{-9}$    &  $1.11 \times 10^{-16}$     &   $6.11 \times 10^{-6}$    & 1.000 \\
\bottomrule
\end{tabular}
\label{tab:nemenyi}
\end{table}

It is important to note that the Friedman–Nemenyi procedure assumes independence among evaluation blocks — an assumption that is partially violated when forecast years are temporally adjacent. Therefore, this non-parametric analysis serves as complementary rather than primary statistical evidence. Nevertheless, the strong agreement with the DM test results reinforces the robustness of the observed performance advantages of the proposed GPR+SVR model.

This analysis further confirmed the superiority of the GPR+SVR model, showing that it consistently achieved better ranks compared to the baseline models. However, it is important to acknowledge a methodological limitation: the Friedman–Nemenyi procedure assumes independence among evaluation blocks (in this case, forecast years), an assumption that may not fully hold in temporal data.

Therefore, the results from the Friedman test are presented as supportive evidence, rather than the primary basis for inferential claims. The primary statistical inference rests on the Diebold–Mariano test, which is explicitly designed for time series forecast comparisons.

The statistical analysis yields several key conclusions. The Diebold–Mariano test provides strong evidence that the proposed GPR+SVR hybrid model significantly outperforms the LEAR benchmark and exhibits consistent, though somewhat weaker, improvements over the DNN model. Comparisons between GPR+SVR and its constituent models (GPR and SVR) reveal no statistically significant differences, which is consistent with expectations given the ensemble’s construction as a simple average. Additionally, the non-parametric Friedman–Nemenyi test based on model rankings across forecast years and error metrics supports these findings, indicating that GPR+SVR generally achieves superior ranks. While the Friedman-based analysis must be interpreted cautiously due to its assumption of block independence—which may not fully hold in temporally structured data—it nonetheless offers complementary evidence. Taken together, these results robustly validate the predictive superiority of the proposed hybrid approach, demonstrating that its performance advantages are statistically and practically meaningful across diverse evaluation criteria and time horizons.

\section{Conclusion and Future Work}
\label{S:Conclusion}

We propose a kernel-based model for predicting electricity prices, focused on the German power market. The choice of kernels was based on the characteristics of the data. Prior knowledge of the behavior of the covariance functions (kernels) used in the model helped us to interpret the predictions and has increased the reliability of the results. 

Since both GPR and SVR have their own limitations and advantages, combining the two models provides flexibility and robustness. When one model performs suboptimally, its prediction has less impact on the final result, while the prediction from the more accurate model is given more consideration, with the final prediction being an average of both models. This approach ensures that the best predictions from both GPR and SVR are used, minimizing the risk of unrealistic predictions caused by noise or outliers in the training data. The issue of extreme values can be addressed by combining the prediction methods that account for such values. However, detecting extreme values and filtering them out for separate modeling adds complexity to the process. A distinguishing advantage of the proposed GPR+SVR ensemble is its capacity for uncertainty quantification, through the Gaussian Process and conformal predictions.This allow the stakeholders to assess not only expected prices but also the associated confidence intervals. This feature is crucial in the context of electricity markets, where uncertainty plays a central role in bidding strategies, grid reliability, and reserve planning. The availability of predictive variance enables risk-aware decision-making and supports anomaly detection in periods of high volatility or structural shifts. While SVR complements GPR by improving robustness against noise and nonlinearity, it is the probabilistic nature of GPR that offers transparency and interpretability attributes especially valued in policy and operational settings. Future work may further exploit this uncertainty information by integrating it directly into downstream optimization and planning models. 

Although generalizable to other energy markets, the framework's kernels and hyperparameters need to be customized to adapt to varying regional characteristics and market structures. Challenges include increased computational complexity, particularly for large datasets, and the dependence on prior knowledge to design appropriate kernels. Furthermore, the Gaussian assumptions in GPR may not fully capture abrupt shifts in market dynamics.

In a continuation of this work, we plan to investigate the efficiency of this kernel-based predictive model for electricity storage in the energy market. This will aid in grid stabilization and energy arbitrage. Additional research will explore the incorporation of advanced techniques, such as deep learning and domain-specific features, to improve robustness and scalability. Efforts to improve computational efficiency and to test the model in diverse markets will further enhance its versatility and practical utility.

\newpage
\bibliographystyle{elsarticle-num} 
\bibliography{bibfile}
\newpage
\appendixpage

\appendix

\section{Sum of Gaussian Processes}
\label{A:SumOfGaussianProcesses}

\begin{theorem}\label{Gaussian Sum}
    The sum of two independent Gaussian processes, 
    $S_{1} = \{S_1(t) : t \in T\}$ and $S_{2} = \{S_2(t) : t \in T\}$ 
    with respective parameters 
    $\mu_{S_{1}}, \Sigma_{S_{1}}$ and $\mu_{S_{2}}, \Sigma_{S_{2}}$, 
    is also a Gaussian process with parameters 
    $\mu_{S_{1}} + \mu_{S_{2}}, \Sigma_{S_{1}} + \Sigma_{S_{2}}$
\end{theorem}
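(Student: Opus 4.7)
The plan is to use the defining property of a Gaussian process directly: a stochastic process is Gaussian if and only if every finite-dimensional marginal $(G(t_1), \ldots, G(t_n))$ is jointly Gaussian. So the strategy reduces to showing that for any finite index set $\{t_1, \ldots, t_n\} \subset T$, the random vector $\mathbf{Y} = (S_1(t_1)+S_2(t_1), \ldots, S_1(t_n)+S_2(t_n))^{\top}$ is multivariate Gaussian, and then to read off its mean and covariance. Since the index set is arbitrary, the global conclusion about the sum process follows.

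First, I would introduce the finite-dimensional vectors $\mathbf{S}_1 = (S_1(t_1), \ldots, S_1(t_n))^{\top}$ and $\mathbf{S}_2 = (S_2(t_1), \ldots, S_2(t_n))^{\top}$, each of which is jointly Gaussian by hypothesis. Second, I would stack them into a $2n$-dimensional vector and argue that independence of the two processes implies that this stacked vector is jointly Gaussian with block-diagonal covariance matrix $\mathrm{diag}(\Sigma_{S_1}, \Sigma_{S_2})$ restricted to those indices; this follows from the factorization of the joint characteristic function for independent random vectors and the fact that the product of two Gaussian characteristic functions is itself Gaussian. Third, I would note that $\mathbf{Y} = A\,(\mathbf{S}_1^{\top}, \mathbf{S}_2^{\top})^{\top}$ with $A = [\,I_n \;\; I_n\,]$, so $\mathbf{Y}$ is an affine image of a Gaussian vector and hence Gaussian. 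Finally, I would compute the mean by linearity of expectation as $\mu_{S_1}+\mu_{S_2}$ restricted to the indices, and the covariance via $A\,\mathrm{diag}(\Sigma_{S_1}, \Sigma_{S_2})\,A^{\top} = \Sigma_{S_1} + \Sigma_{S_2}$, using independence to kill the cross-covariance blocks.

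The main subtlety sits in the second step: one cannot blindly invoke \emph{``the sum of two Gaussians is Gaussian''} at the scalar level, since that statement generally requires joint Gaussianity of the summands rather than merely marginal Gaussianity. What rescues the argument here is the hypothesis of independence, which upgrades marginal Gaussianity to joint Gaussianity via the characteristic-function factorization. Once this is in hand, everything else is routine linear algebra, and the Kolmogorov-style observation that a process whose every finite-dimensional marginal is Gaussian is itself a Gaussian process closes the proof.
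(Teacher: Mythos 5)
Your proposal is correct, and it follows the same overall skeleton as the paper's proof: reduce to finite-dimensional marginals, establish joint Gaussianity of the sum, then obtain the mean by linearity of expectation and the covariance by bilinearity together with independence killing the cross terms. Where you differ is precisely at the step you flag as the "main subtlety." The paper's proof asserts that since each $S_1(t_i)$ and $S_2(t_i)$ is Gaussian, the vector $(S(t_1),\ldots,S(t_n))$ is a Gaussian random vector — but this is exactly the non-sequitur you warn against: marginal Gaussianity of summands does not by itself give joint Gaussianity of the sums, and the paper never actually invokes independence at this stage (it only uses it later, to drop cross-covariances). Your version repairs this by stacking $\mathbf{S}_1$ and $\mathbf{S}_2$ into a $2n$-dimensional vector, using the factorization of the joint characteristic function under independence to show the stacked vector is jointly Gaussian with block-diagonal covariance, and then writing $\mathbf{Y}=\left[\,I_n\ \ I_n\,\right](\mathbf{S}_1^{\top},\mathbf{S}_2^{\top})^{\top}$ as a linear image of a Gaussian vector. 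This buys genuine rigor at the only place the argument can fail, and it unifies the mean and covariance computations into the single identity $A\,\mathrm{diag}(\Sigma_{S_1},\Sigma_{S_2})\,A^{\top}=\Sigma_{S_1}+\Sigma_{S_2}$. In short, your proof is a corrected and tightened version of the paper's; the paper's covariance computation is fine, but its joint-Gaussianity step is incomplete as written and your characteristic-function argument is what is actually needed to close it.
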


\begin{proof}
    Let the sum of two Gaussian process 
    $S_{1} = \{S_1(t) : t \in T\}$ and $S_{2} = \{S_2(t) : t \in T\}$ be denoted by 
    $S=\{S_1(t) + S_{2}(t) : t \in T\}$. 
    Now we need to show that $S$ is a Gaussian process with a mean of $S$, $\mu_{S}=  \mu_{S_{1}}+\mu_{S_{2}}$ and covariance, $\Sigma_{S} = \Sigma_{S_{1}}+\Sigma_{S_{2}}$. It is sufficient to show that any finite collection from $S$ is jointly Gaussian. 
    Let $\Tilde{S} = \{S(t_1), S(t_2), \ldots, S(t_n)\}$ denote the finite collection from $S$ such that each $S(t_{i}) = S_{1}(t_{i}) + S_{2}(t_{i})\text{, }i=1,\cdots,n\text{, }t_{i}\in T$. Since each $S_{1}(t_{i})$ and $S_{2}(t_{i})$ are Gaussian random variables, this implies that $S(t_{i})$ is also a random variable, meaning that $\Tilde{S}$ forms a Gaussian random vector and the joint distribution of  $\{S(t_1), S(t_2), \ldots, S(t_n)\}$ is multivariate Gaussian. Hence, $S$ is a Gaussian process.\newline\newline
    Next, we show that $\mu_{S} = \mu_{S_{1}}+\mu_{S_{2}}$. We know that:
    \begin{equation*}
        \begin{split}
            \mu_{S} &= \mathbb{E}[S(t)]\\
                    &= \mathbb{E}[S_1(t) + S_2(t)]\\
                    &= \mathbb{E}[S_1(t)] +  \mathbb{E}[S_2(t)]\\
                    &= \mu_{S_{1}} + \mu_{S_{2}}
        \end{split}
    \end{equation*}
    Similarly, we showed that $\Sigma_{S} = \Sigma_{S_{1}}+\Sigma_{S_{2}}$ as follows:
    \begin{equation*}
        \begin{split}
            \Sigma_{S}(s, t) &= \text{Cov}(S(s), S(t)),\text{ } \forall s,t\in T\\
                             &= \text{Cov}(S_1(s) + S_2(s), S_1(t) + S_2(t))\\
                             &= \text{Cov}(S_1(s), S_1(t)) + \text{Cov}(S_1(s), S_2(t)) + \text{Cov}(S_2(s), S_1(t)) \\
                             & + \text{Cov}(S_2(s), S_2(t))\\
                             &= \text{Cov}(S_1(s), S_1(t)) + \text{Cov}(S_2(s), S_2(t))\text{ \{Due to independence},\\ & \text{the cross covariance is zero\}}\\
                             &= \Sigma_{S_{1}}(s, t)+\Sigma_{S_{2}}(s, t)
        \end{split}
    \end{equation*}
    \begin{flushright}
        This completes the proof.
    \end{flushright}
    
\end{proof}

Let us examine the relationship between the posterior mean and covariance functions of the GPR when the data are modeled with two different Gaussian processes, with their respective means and covariance functions, and when the data are modeled by the sum of the two previous covariance functions. 
For this, we use the data from Section \ref{S:GaussianProcessRegression}. 
Let $\mathbf{P} = \{P_{\mathbf{t}_{1}}, \cdots, P_{\mathbf{t}_{u}} \}$ be the given data and we then need to perform the Gaussian process regression for some 
$\mathbf{P}^{*} = \{P_{\mathbf{t}_{1}^{*}}, \cdots, P_{\mathbf{t}_{m}^{*}} \}$ at $\{\mathbf{t}_{1}^{*}, \cdots, \mathbf{t}_{u}^{*}\}$, 
which are not observable. For this, we first take a Gaussian process, say $S_{1}$, as defined above, where the mean function is a zero function and the covariance function is given by $\Sigma^{(1)}$. Using the expression from  equations (\ref{posterior mean}) and (\ref{posterior cov}) we can obtain the posterior mean, $\mu^{*(1)}$, as follows: 
\begin{equation}\label{GP1mean}
    \mu^{*(1)} = \Sigma^{(1)}_{\mathbf{P}^{*}\mathbf{P}}\left(\Sigma^{(1)}_{\mathbf{P}}\right)^{\text{\textminus}1}\mathbf{P}\text{\hspace{1cm} and }
\end{equation}
\begin{equation}\label{GP1cov}
    \Sigma^{*(1)} = \Sigma^{(1)}_{\mathbf{P}^{*}} \text{\textminus} \Sigma^{(1)}_{\mathbf{P}^{*},\mathbf{P}} \left(\Sigma^{(1)}_{\mathbf{P}}\right)^{\text{\textminus}1}\Sigma^{(1)}_{\mathbf{P},\mathbf{P}^{*}}
\end{equation}
Similarly, if we perform the GPR using a Gaussian process $S_{2}$ with a zero mean function and covariance function given by $K^{(2)}$, then the posterior mean $\mu^{*(2)}$ is as follows: 
\begin{equation}\label{GP2mean}
    \mu^{*(2)} = \Sigma^{(2)}_{\mathbf{P}^{*}\mathbf{P}}\left(\Sigma^{(2)}_{\mathbf{P}}\right)^{\text{\textminus}1}\mathbf{P}\text{\hspace{1cm} and }
\end{equation} 
\begin{equation}
    \Sigma^{*(2)} = \Sigma^{(2)}_{\mathbf{P}^{*}} \text{\textminus} \Sigma^{(2)}_{\mathbf{P}^{*},\mathbf{P}} \left(\Sigma^{(2)}_{\mathbf{P}}\right)^{\text{\textminus}1}\Sigma^{(2)}_{\mathbf{P},\mathbf{P}^{*}}
\end{equation}\label{GP2cov}
Next, when we perform the Gaussian process regression with a Gaussian process, say $S$, which is defined as above with a zero mean function and covariance function given by $\Sigma=\Sigma^{(1)}+\Sigma^{(2)}$, then the posterior mean and covariance are as follows:
\begin{equation}\label{GP_com_mean}
    \mu^{*} = \Sigma_{\mathbf{P}^{*}\mathbf{P}}\left(\Sigma_{\mathbf{P}}\right)^{\text{\textminus}1}\mathbf{P}\text{\hspace{1cm} and }
\end{equation} 
\begin{equation}\label{GP_com_cov}
    \Sigma^{*} = \Sigma_{\mathbf{P}^{*}} \text{\textminus} \Sigma_{\mathbf{P}^{*},\mathbf{P}} \left(\Sigma_{\mathbf{P}}\right)^{\text{\textminus}1}\Sigma_{\mathbf{P},\mathbf{P}^{*}}
\end{equation}
\hfill\newline
From equations (\ref{GP1mean}) to (\ref{GP_com_cov}), we can rewrite the posterior mean of the Gaussian Process Regression (GPR) with the covariance function as the sum of the previous two covariance function,s in terms of the individual posterior means, due to the linearity of expectations and additive structure of the Gaussian process. However, the summation may not hold for the posterior variance because of the dependence introduced by conditioning on the same data. Specifically, when the posterior is computed, cross-covariance terms between the two processes appear, reflecting interactions between the two Gaussian processes
that were not present in the prior. These cross-covariance terms prevent the posterior variance from being a simple sum of the individual posterior variances.

\section{Exploring Periodic Behavior through the Sum of Squared Exponential and Rational Quadratic Kernels}
\label{A:ExploringPeriodicBehavior}
In this model, we have not used kernels to capture the periodicity because the combination of the rational quadratic kernel and the squared exponential kernel is capable of tracking the periodicity to a certain extent. However, we have tested the same dataset for the GPR using the sum of the squared exponential and rational quadratic and summing them to the local periodic kernel. The local periodic kernel is given by:
\begin{equation}
K_{lp}(\mathbf{x}_{i}, \mathbf{x}_{j}) = \exp\left(\text{\textminus}\frac{2 \sin^2\left(\frac{\pi ||\mathbf{x}_{i} \text{\textminus} \mathbf{x}_{j}||}{p}\right)}{\ell_{lp}^{2}}\right) \cdot \exp\left(\text{\textminus}\frac{||\mathbf{x}_{i} \text{\textminus} \mathbf{x}_{j}||^2}{2\sigma_{lp}^{2}}\right).
\end{equation}
In this case the covariance function for the model is as follows:
\begin{equation}\label{se+ratquad+lp}
    \begin{split}
        K^{'}_{\text{GPR}}(\mathbf{x}_{i},\mathbf{x}_{j}) &=  K_{se}(\mathbf{x}_{i},\mathbf{x}_{j}) + K_{raq}(\mathbf{x}_{i},\mathbf{x}_{j}) + K_{lp}(\mathbf{x}_{i},\mathbf{x}_{j})\\
        &= \sigma^{2}_{se}\exp\left(\text{\textminus}\frac{\vert\vert \mathbf{x_{i}} \text{\textminus} \mathbf{x_{j}} \vert\vert^{2}}{2\ell^{2}_{se}}\right) +  \sigma^{2}_{raq}\left(1+\frac{\vert\vert\mathbf{x_{i}} \text{\textminus} \mathbf{x_{j}}\vert\vert^{2}}{2\alpha \ell_{rq}}\right)^{\text{\textminus}\alpha}  \\
        &+ \exp\left(\text{\textminus}\frac{2 \sin^2\left(\frac{\pi ||\mathbf{x}_{i} \text{\textminus} \mathbf{x}_{j}||}{p}\right)}{\ell_{lp}^{2}}\right) \cdot \exp\left(\text{\textminus}\frac{||\mathbf{x}_{i} \text{\textminus} \mathbf{x}_{j}||^2}{2\sigma_{lp}^{2}}\right)
    \end{split}
\end{equation}
The errors in Table \ref{tab:local_periodic_comparison} indicate that both models yield similar prediction errors, with the model excluding the local periodic kernel showing a marginally better performance. This suggests that, for this particular dataset, the local periodic kernel does not provide a significant advantage and, in fact, may offer slightly reduced accuracy. These findings reveal that the combination of the rational quadratic and squared exponential kernels may be sufficient to capture the underlying patterns. We discuss this as a next remark.
\begin{table}[ht!]
    \centering
    \begin{tabular}{|c|c|}
    \hline
         Models & RMSE \\
         \hline
         GPR with $K_{GPR}$ & 33.800 \\
         \hline
         GPR with $K^{'}_{GPR}$ & 36.59\\
         \hline
    \end{tabular}
    \caption{\centering{Comparison of Prediction Errors for GPR Models with and without the Local Periodic Kernel for 2022}}
    \label{tab:local_periodic_comparison}
\end{table}
\newpage

\begin{theorem}
Let \( f \) be a smooth periodic function defined by:
$$
f(x,x') = \sigma^2 \exp\left(\text{\textminus}\frac{2 \sin^2\left(\frac{\pi |x \text{\textminus} x'|}{p}\right)}{\ell^2}\right) \exp\left(\text{\textminus}\frac{(x \text{\textminus} x')^2}{2\ell^2}\right),
$$
where \( \sigma > 0 \), \( \ell > 0 \), and \( p > 0 \). For \( x \) and \( x' \) in a small interval \( \Delta x = x - x' \), the function \( f \) can be approximated by the combined kernel:
$$
k(x, x') = k_{\text{SE}}(x, x') + k_{\text{RQ}}(x, x'),
$$
where the squared exponential (SE) kernel is:
$$
k_{\text{SE}}(x, x') = \sigma_{\text{SE}}^2 \exp\left(\text{\textminus}\frac{(x \text{\textminus} x')^2}{2 \ell_{\text{SE}}^2}\right),
$$
and the Rational Quadratic (RQ) kernel is:
$$
k_{\text{RQ}}(x, x') = \sigma_{\text{RQ}}^2 \left(1 + \frac{(x \text{\textminus} x')^2}{2 \alpha \ell_{\text{RQ}}^2}\right)^{\text{\textminus}\alpha},
$$
with $ \sigma_{\text{SE}}, \sigma_{\text{RQ}} > 0 $, $ \ell_{\text{SE}}, \ell_{\text{RQ}} > 0 $, and $ \alpha > 0 $. Given appropriate choices of $ \ell_{\text{SE}} $ and $ \ell_{\text{RQ}} $, the kernel $ k(x, x') $ approximates the local behavior of $ f $ over short intervals.
\end{theorem}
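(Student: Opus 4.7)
The plan is to establish the asymptotic equivalence of $f$ and $k = k_{\text{SE}} + k_{\text{RQ}}$ in the small-distance regime by comparing their Taylor expansions in $r := x - x'$ about $r = 0$ and matching coefficients of like powers through an appropriate choice of hyperparameters.

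First I would expand $f(x,x')$ about $r=0$. Using $\sin(u) = u - u^3/6 + O(u^5)$, one obtains $\sin^2(\pi r/p) = \pi^2 r^2/p^2 - \pi^4 r^4/(3 p^4) + O(r^6)$, so the exponent collapses to
\begin{equation*}
-\frac{2\sin^2(\pi r/p)}{\ell^2} - \frac{r^2}{2\ell^2} = -A\,r^2 + B\,r^4 + O(r^6),
\end{equation*}
with $A = 2\pi^2/(p^2\ell^2) + 1/(2\ell^2)$ and $B = 2\pi^4/(3 p^4 \ell^2)$. Exponentiating and collecting powers gives
\begin{equation*}
f(x,x') = \sigma^2 \bigl(1 - A\,r^2 + (B + A^2/2)\,r^4 + O(r^6)\bigr).
\end{equation*}

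Next I would Taylor expand the two components of $k$ in $r^2$:
\begin{equation*}
k_{\text{SE}} = \sigma_{\text{SE}}^2\Bigl(1 - \frac{r^2}{2\ell_{\text{SE}}^2} + \frac{r^4}{8\ell_{\text{SE}}^4} + \cdots\Bigr), \qquad k_{\text{RQ}} = \sigma_{\text{RQ}}^2\Bigl(1 - \frac{r^2}{2\ell_{\text{RQ}}^2} + \frac{(\alpha+1)\,r^4}{8\alpha\,\ell_{\text{RQ}}^4} + \cdots\Bigr).
\end{equation*}
Summing and equating like coefficients with those of $f$ through order $r^4$ yields three conditions:
\begin{equation*}
\sigma_{\text{SE}}^2 + \sigma_{\text{RQ}}^2 = \sigma^2, \qquad \frac{\sigma_{\text{SE}}^2}{\ell_{\text{SE}}^2} + \frac{\sigma_{\text{RQ}}^2}{\ell_{\text{RQ}}^2} = 2\sigma^2 A, \qquad \frac{\sigma_{\text{SE}}^2}{\ell_{\text{SE}}^4} + \frac{(\alpha+1)\sigma_{\text{RQ}}^2}{\alpha\,\ell_{\text{RQ}}^4} = 8\sigma^2\!\left(B + \tfrac{A^2}{2}\right).
\end{equation*}
Since the sum kernel exposes five positive hyperparameters against these three constraints, the system is under-determined and admits a two-parameter family of admissible solutions; any such representative delivers $|f(x,x') - k(x,x')| = O(r^6)$ uniformly on $|r| \le \delta$ for $\delta$ sufficiently small, and the same matching procedure can in principle be pushed to arbitrarily high order if one tolerates more elaborate algebra.

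The main obstacle is that the statement is intrinsically qualitative: \emph{approximates the local behavior over short intervals} fixes neither an order nor a norm. The first task is therefore to sharpen the claim, e.g.\ to a uniform pointwise bound of the form $\sup_{|r|\le\delta}|f - k| \le C\delta^{6}$, and then to certify that the hyperparameters solving the matching system can always be chosen in their admissible ranges ($\sigma_{\text{SE}},\sigma_{\text{RQ}},\ell_{\text{SE}},\ell_{\text{RQ}} > 0$ and $\alpha > 0$). A further subtlety is that $f$ is periodic in $r$ whereas $k_{\text{SE}} + k_{\text{RQ}}$ is strictly monotone in $r^2$, so the approximation must inevitably fail as $|r|$ approaches the period $p$; a rigorous version should restrict to $\delta \ll \min(p,\ell)$ and make the effective radius of validity explicit in terms of $p$ and $\ell$.
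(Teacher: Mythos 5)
Your proposal is correct and follows essentially the same route as the paper's proof: a small-$\Delta x$ expansion of the $\sin^2$ factor that collapses $f$ into an effective Gaussian, followed by Taylor expansion of $k_{\text{SE}}+k_{\text{RQ}}$ and matching of coefficients. You carry the matching one order further (to $r^4$, giving an $O(r^6)$ error and the under-determined three-equation system), whereas the paper stops at second order and leaves the choice of $\ell_{\text{SE}}$, $\ell_{\text{RQ}}$ qualitative; your observations about the needed restriction $\delta\ll\min(p,\ell)$ and the remaining task of certifying positivity of the matched hyperparameters are sharper than, but consistent with, what the paper actually argues.
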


\begin{proof}
Consider the Taylor expansion of \( f(x, x') \) around \( x' \):
$$
f(x, x') \approx f(x') + f'(x') \Delta x + \frac{1}{2} f''(x') (\Delta x)^2,
$$
where $ \Delta x = x \text{\textminus} x' $.

The function \( f(x,x') \) is given by:
\begin{equation}\label{eq:loc_per}
f(x,x') = \sigma^2 \exp\left(\text{\textminus}\frac{2 \sin^2\left(\frac{\pi |\Delta x|}{p}\right)}{\ell^2}\right) \exp\left(\text{\textminus}\frac{(\Delta x)^2}{2\ell^2}\right).
\end{equation}
For small \( \Delta x \), we use the approximation:
$$
\sin^2\left(\frac{\pi \Delta  x}{p}\right) \approx \left(\frac{\pi \Delta  x}{p}\right)^2.
$$
Thus:
$$
\exp\left(\text{\textminus}\frac{2 \sin^2\left(\frac{\pi \Delta x}{p}\right)}{\ell^2}\right) \approx \exp\left(\text{\textminus}\frac{2 \pi^2 (\Delta x)^2}{p^2 \ell^2}\right).
$$
So equation (\ref{eq:loc_per}) can be approximated as:
\begin{equation}\label{approx_loc}
f(x, x') \approx \sigma^2 \exp\left(\text{\textminus}\frac{(\Delta x)^2}{2\ell^2} \text{\textminus} \frac{2 \pi^2 (\Delta x)^2}{p^2 \ell^2}\right) = \sigma^2 \exp\left(\text{\textminus}\frac{(\Delta x)^2 }{2\ell_{\text{eff}}^{2}}\right),
\end{equation}
where: \begin{equation}\label{eff_len}
\frac{1}{\ell_{\text{eff}}^{2}} = \frac{1}{\ell^{2}} + \frac{4\pi^{2}}{p^{2}\ell^{2}}.
\end{equation}

The squared exponential (SE) function kernel for small \( \Delta x \) is:
$$
k_{\text{SE}}(x, x') \approx \sigma_{\text{SE}}^2 \left(1 \text{\textminus} \frac{(\Delta x)^2}{2 \ell_{\text{SE}}^2}\right).
$$

The Rational Quadratic (RQ) function kernel for small \( \Delta x \) is:
$$
k_{\text{RQ}}(x, x') \approx \sigma_{\text{RQ}}^2 \left(1 \text{\textminus} \frac{(\Delta x)^2}{2 \ell_{\text{RQ}}^2}\right).
$$

Combining these approximations gives:
\begin{align*}
k(x, x') &= k_{\text{SE}}(x, x') + k_{\text{RQ}}(x, x')
\\
&\approx \sigma_{\text{SE}}^2 \left(1 \text{\textminus} \frac{(\Delta x)^2}{2 \ell_{\text{SE}}^2}\right) + \sigma_{\text{RQ}}^2 \left(1 \text{\textminus} \frac{(\Delta x)^2}{2 \ell_{\text{RQ}}^2}\right)
\\
&= \left(\sigma_{\text{SE}}^2 + \sigma_{\text{RQ}}^2\right) \text{\textminus} \left(\frac{\sigma_{\text{SE}}^2}{2 \ell_{\text{SE}}^2} + \frac{\sigma_{\text{RQ}}^2}{2 \ell_{\text{RQ}}^2}\right) (\Delta x)^2.
\end{align*}
For the approximation to hold, $ |\Delta x| $ must be small relative to both the periodic length $ p $ and the length scale $ \ell $. Specifically, the approximation $ \sin^2\left(\frac{\pi \Delta x}{p}\right) \approx \left(\frac{\pi \Delta x}{p}\right)^2 $ becomes invalid if $ |\Delta x| $ approaches $ p $. The effective length scale $ \ell_{\text{eff}} $ incorporates both the smooth decay (governed by $ \ell $) and the periodic modulation (governed by $ p $). This means that $ f(x, x') $ has a Gaussian-like decay locally, which the SE kernel $ k_{\text{SE}}(x, x') $ can capture. The Rational Quadratic (RQ) kernel introduces flexibility in approximating local variations of $ f(x, x') $. From the approximation of RQ, it is clear that for small $ \Delta x $, the RQ complements the Gaussian decay of the SE kernel. By choosing $ \ell_{\text{SE}}^2 \approx \ell_{\text{eff}}^2 $ and $ \ell_{\text{RQ}}^2 $ appropriately, the combined kernel $ k(x, x') $ can approximate $ f(x, x') $ over small intervals.
While $ k(x, x') $ does not explicitly model periodicity over large intervals, it captures the dominant spectral components of $ f(x, x') $ locally. The Gaussian-like decay ensures that the approximation holds for small $ \Delta x $, provided that $ \ell_{\text{SE}}, \ell_{\text{RQ}}, $ and $ \alpha $ are appropriately tuned.
\end{proof}
\newpage

\section{Seasonal Comparisons}
\label{A:SeasonalComparisons}

\begin{figure}[ht!]
        \centering
        \begin{subfigure}{0.5\linewidth}
            \includegraphics[height = 5cm, width = 7cm]{ 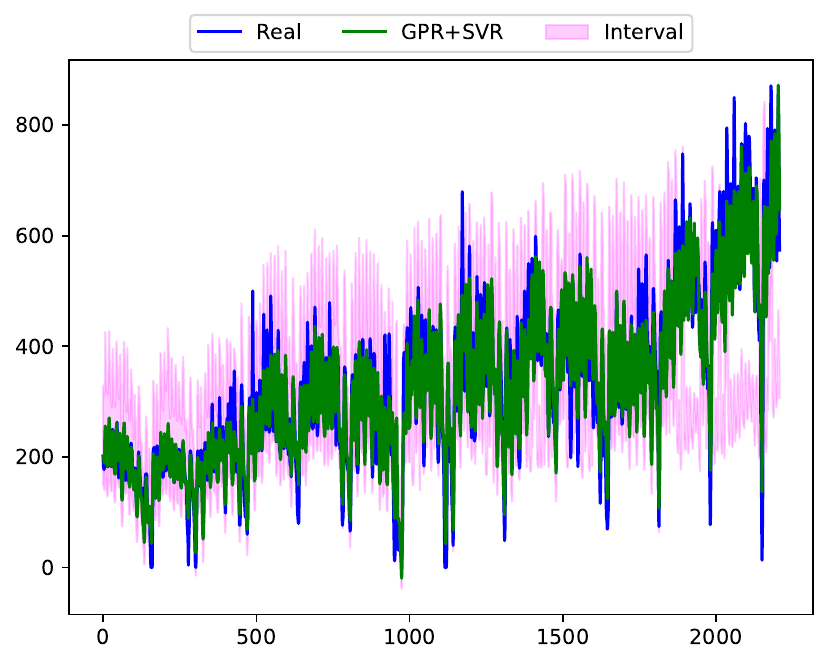}
            \caption{}
            \label{summer_2022}
        \end{subfigure}
        \begin{subfigure}{0.45\linewidth}
            \includegraphics[height = 5cm, width = 7cm]{ 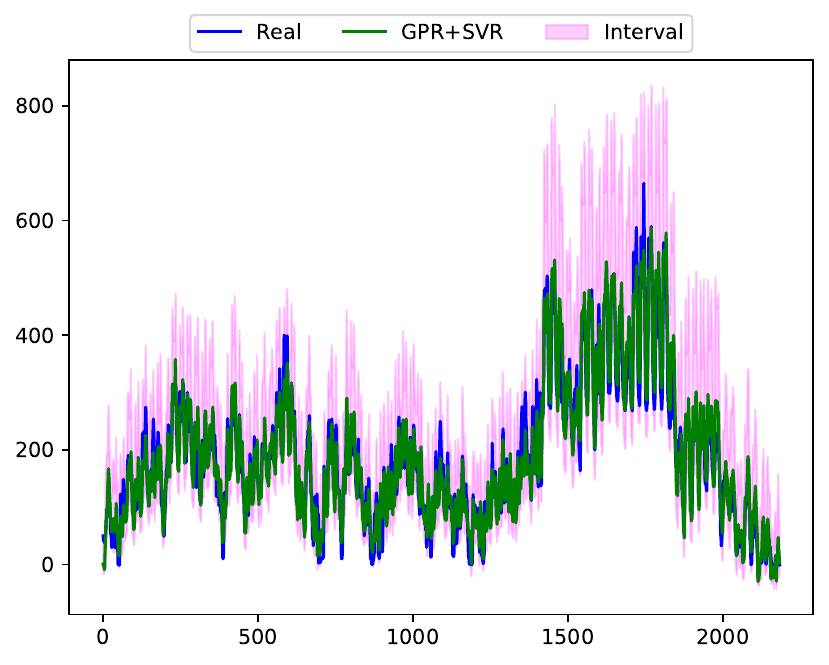}
            \caption{}
            \label{winter_2022}
        \end{subfigure}
        \begin{subfigure}{0.5\linewidth}
            \includegraphics[height = 5cm, width = 7cm]{ 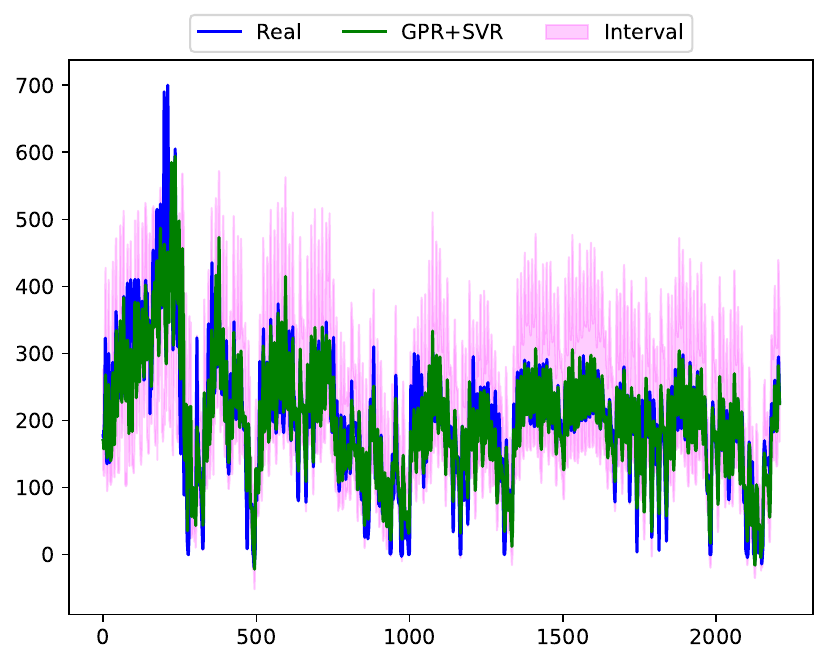}
            \caption{}
            \label{spring_2022}
        \end{subfigure}
        \begin{subfigure}{0.45\linewidth}
            \includegraphics[height = 5cm, width = 7cm]{ 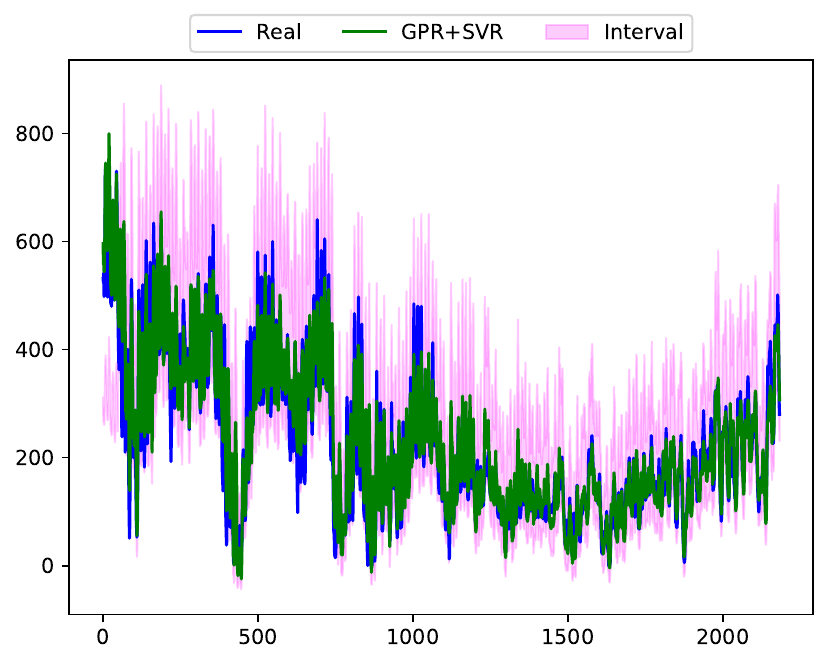}
            \caption{}
            \label{autumn_2022}
        \end{subfigure}
        \caption{\centering{Seasonal Comparison: (a) Summer of 2022}, (b) Winter of 2022, (c) Spring of 2022 and (d) Autumn of 2022}
        \label{seasonal prediction 2022}
\end{figure}

\begin{figure}[ht!]
        \centering
        \begin{subfigure}{0.5\linewidth}
            \includegraphics[height = 5cm, width = 7cm]{ 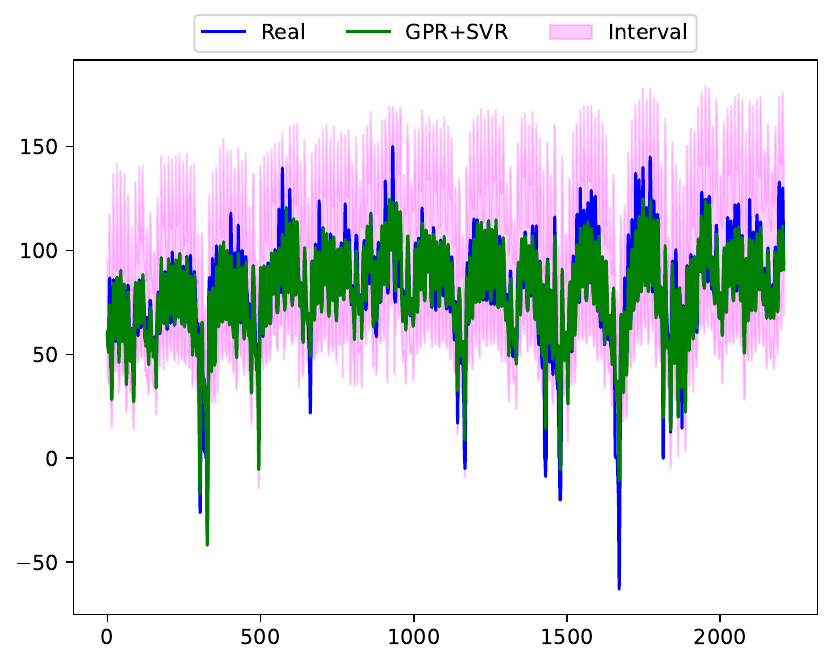}
            \caption{}
            \label{summer_2021}
        \end{subfigure}
        \begin{subfigure}{0.45\linewidth}
            \includegraphics[height = 5cm, width = 7cm]{ 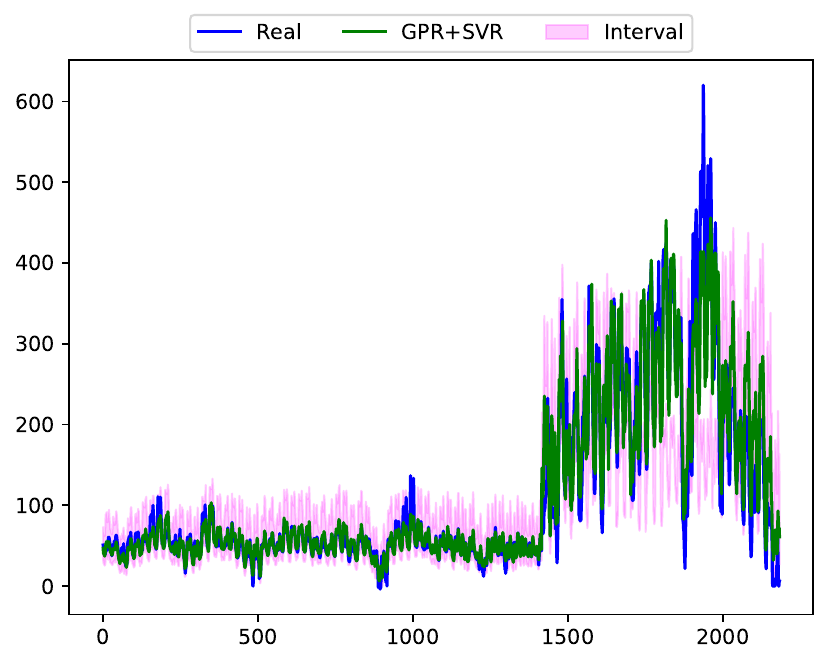}
            \caption{}
            \label{winter_2021}
        \end{subfigure}
        \begin{subfigure}{0.5\linewidth}
            \includegraphics[height = 5cm, width = 7cm]{ 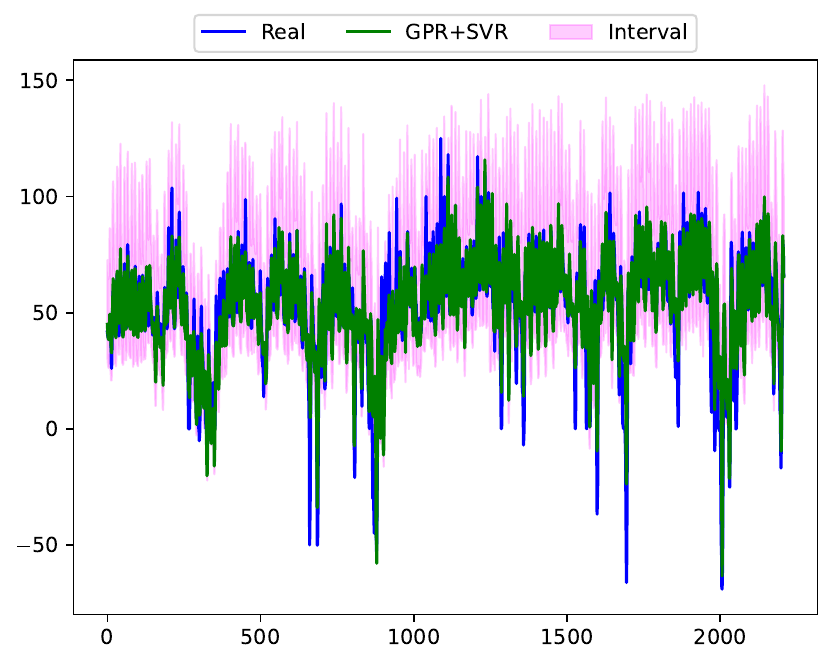}
            \caption{}
            \label{spring_2021}
        \end{subfigure}
        \begin{subfigure}{0.45\linewidth}
            \includegraphics[height = 5cm, width = 7cm]{ 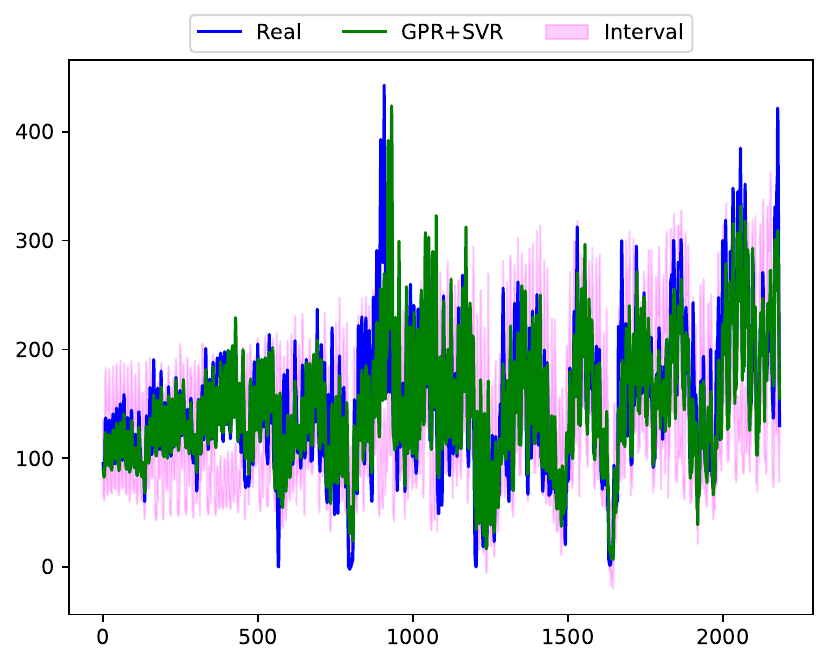}
            \caption{}
            \label{autumn_2021}
        \end{subfigure}
        \caption{\centering{Seasonal Comparison: (a) Summer of 2021}, (b) Winter of 2021, (c) Spring of 2021 and (d) Autumn of 2021}
        \label{seasonal prediction 2021}
\end{figure}
For both years 2022 and 2021, model performance is studied using all 365 days, from January 1 to December 31 of each individual year. The seasonal forecasts generated by the hybrid model are illustrated in Figures \ref{seasonal prediction 2022} and \ref{seasonal prediction 2021} for 2022 and 2021, respectively. Each represents the four meteorological seasons in Germany—winter (January, February, and December), spring (March to May), summer (June to August), and autumn (September to November)—shown in Figures \ref{summer_2022} to \ref{autumn_2022} and Figures \ref{summer_2021} to \ref{autumn_2021}. Across both years, the predicted time series aligns well with the observed data in terms of both amplitude and general temporal trends. However, for specific short intervals, particularly those exhibiting rapid and large price deviations, the prediction error increases. These deviations are caused by abrupt shifts in the underlying price signal that cannot be adequately captured by the model due to their transient and irregular nature. Mathematically, this corresponds to intervals where the local variance in the training data is high but not sustained, leading to larger pointwise errors. Despite these localized mismatches, the model demonstrates robust performance across the full annual cycle in both 2022 and 2021.
  
\end{document}